\newcommand{\bl}[1]{\textcolor{blue}{#1}}
\newcommand{\red}[1]{\textcolor{red}{#1}}
\definecolor{mypurple}{rgb}{.4,.0,.5}
\def\y{{\bf y}}
\def\x{{\bf x}}
\def\x{{\mathbf x}}
\def\u{{\bf u}}
\def\x{{\bf x}}
\def\y{{\bf y}}
\def\z{{\bf z}}
\def\q{{\bf q}}
\def\m{{\bf m}}
\def\c{{\bf c}}
\def\h{{\bf h}}
\def\bg{\bar{\g}}
\def\cH{{\mathcal H}}
\def\cG{{\mathcal G}}
\def\be{\begin{equation}}
\def\ee{\end{equation}}
\def\ba{\left[\begin{array}}
\def\ea{\end{array}\right]}
\def\u{{\bf u}}
\def\x{{\bf x}}
\def\y{{\bf y}}
\def\z{{\bf z}}
\def\q{{\bf q}}
\def\c{{\bf c}}
\def\p{{\bf p}}
\def\1{{\bf 1}}
\def\g{{\bf g}}
\def\0{{\bf 0}}
\def\erfinv{\mbox{erfinv}}
\def\erf{\mbox{erf}}
\def\erfc{\mbox{erfc}}
\def\calX{{\cal X}}
\def\calY{{\cal Y}}
\def\mR{{\mathbb R}}
\def\mN{{\mathbb N}}
\def\mE{{\mathbb E}}
\def\mS{{\mathbb S}}
\def\mP{{\mathbb P}}
\def\lp{\left (}
\def\rp{\right )}
\def\y{{\bf y}}
\def\x{{\bf x}}
\def\x{{\mathbf x}}
\def\u{{\bf u}}
\def\x{{\bf x}}
\def\y{{\bf y}}
\def\z{{\bf z}}
\def\q{{\bf q}}
\def\c{{\bf c}}
\def\h{{\bf h}}
\def\cH{{\cal H}}
\def\be{\begin{equation}}
\def\ee{\end{equation}}
\def\ba{\left[\begin{array}}
\def\ea{\end{array}\right]}
\def\u{{\bf u}}
\def\x{{\bf x}}
\def\y{{\bf y}}
\def\z{{\bf z}}
\def\q{{\bf q}}
\def\c{{\bf c}}
\def\p{{\bf p}}
\def\({\left (}
\def\){\right )}
\def\1{{\bf 1}}
\def\m{{\bf m}}
\def\q{{\bf q}}
\def\g{{\bf g}}
\def\0{{\bf 0}}
\def\cX{{\mathcal X}}
\def\cY{{\mathcal Y}}
\def\erfinv{\mbox{erfinv}}
\definecolor{darkgreen}{rgb}{0, 0.4,0}
\definecolor{purplebrown}{rgb}{0.5,0.1,0.6}
\definecolor{ultclupcol}{rgb}{0.1,0.5,0.5}
\definecolor{mytrycolor}{rgb}{0.5,0.7,0.2}
\definecolor{ultclupcola}{rgb}{.5,0,.5}
\definecolor{shadebrown}{rgb}{0.1,0.1,0.9}
\definecolor{lightblue}{rgb}{0.2,0,1}
\newtcbox{\xmybox}{on line,
arc=7pt,
before upper={\rule[-3pt]{0pt}{10pt}},boxrule=0pt,
boxsep=0pt,left=6pt,right=6pt,top=0pt,bottom=0pt,enhanced, coltext=blue, colback=white!10!yellow}
\newtcbox{\xmyboxa}{on line,
arc=7pt,
before upper={\rule[-3pt]{0pt}{10pt}},boxrule=0pt,
boxsep=0pt,left=6pt,right=6pt,top=0pt,bottom=0pt,enhanced, colback=white!10!yellow}
\newtcbox{\xmyboxb}{on line,
arc=7pt,
before upper={\rule[-3pt]{0pt}{10pt}},boxrule=1pt,colframe=darkgreen!100!blue,
boxsep=0pt,left=6pt,right=6pt,top=0pt,bottom=0pt,enhanced, colback=white!10!yellow}
\newtcbox{\xmyboxc}{on line,
arc=7pt,
before upper={\rule[-3pt]{0pt}{10pt}},boxrule=.7pt,colframe=blue!100!blue,
boxsep=0pt,left=6pt,right=6pt,top=0pt,bottom=0pt,enhanced, coltext=blue, colback=white!10!yellow}
\newtcbox{\xmytboxa}{on line,
arc=7pt,
before upper={\rule[-3pt]{0pt}{10pt}},boxrule=.0pt,colframe=pink!50!yellow,
boxsep=0pt,left=6pt,right=6pt,top=0pt,bottom=0pt,enhanced, coltext=white, colback=blue!40!red}
\newtcbox{\xmytboxb}{on line,
arc=7pt,
before upper={\rule[-3pt]{0pt}{10pt}},boxrule=.0pt,colframe=pink!50!yellow,
boxsep=0pt,left=6pt,right=6pt,top=0pt,bottom=0pt,enhanced, coltext=white, colback=white!40!green}
\newcommand\subsubsubsection{\@startsection{paragraph}{4}{\z@}{-2.5ex\@plus -1ex \@minus -.25ex}{1.25ex \@plus .25ex}{\normalfont\normalsize\bfseries}}
\newcommand\subsubsubsubsection{\@startsection{subparagraph}{5}{\z@}{-2.5ex\@plus -1ex \@minus -.25ex}{1.25ex \@plus .25ex}{\normalfont\normalsize\bfseries}}
\newtheorem{theorem}{Theorem}
\newtheorem{corollary}{Corollary}
\begin{document}

\begin{singlespace}

\title {Capacity of the Hebbian-Hopfield network associative memory  
}
\author{
\textsc{Mihailo Stojnic
\footnote{e-mail: {\tt flatoyer@gmail.com}} }}
\date{}
\maketitle

\centerline{{\bf Abstract}} \vspace*{0.1in}

In \cite{Hop82}, Hopfield introduced a \emph{Hebbian} learning rule based neural network model and suggested how it can efficiently operate as an associative memory. Studying random binary patterns, he also uncovered that, if a small fraction of errors is tolerated in the stored patterns retrieval, the capacity of the network (maximal number of memorized patterns, $m$) scales linearly with each pattern's size, $n$. Moreover, he famously predicted $\alpha_c=\lim_{n\rightarrow\infty}\frac{m}{n}\approx 0.14$. We study this very same scenario with two most famous pattern's basins of attraction notions: \textbf{\emph{(i)}} The AGS one from  \cite{AmiGutSom85} which relies on the existence of a local energy minimum around aimed pattern; and \textbf{\emph{(ii)}} The NLT one from \cite{Newman88,Louk94,Louk94a,Louk97,Tal98} which relies on the existence of a firm energy barrier around patterns. Relying on the \emph{fully lifted random duality theory} (fl RDT) from \cite{Stojnicflrdt23}, we obtain the following explicit closed form capacity characterizations on the first level of lifting:
\begin{eqnarray}
 \alpha_c^{(AGS,1)}  & = &  \left ( \max_{\delta\in \left ( 0,\frac{1}{2}\right ) }\frac{1-2\delta}{\sqrt{2} \mbox{erfinv} \left ( 1-2\delta\right )} -   \frac{2}{\sqrt{2\pi}} e^{-\left ( \mbox{erfinv}\left ( 1-2\delta \right )\right )^2}\right )^2 \approx \mathbf{0.137906} \nonumber \\
  \alpha_c^{(NLT,1)} &  =  & \frac{\mbox{erf}(x)^2}{2x^2}-1+\mbox{erf}(x)^2 \approx \mathbf{0.129490}, \quad
1-\mbox{erf}(x)^2- \frac{2\mbox{erf}(x)e^{-x^2}}{\sqrt{\pi}x}+\frac{2e^{-2x^2}}{\pi}=0.\nonumber
 \label{eq:abs2}
\end{eqnarray}
To maximally utilize fl RDT, one also needs to perform substantial numerical work on higher levels of lifting. After doing so, we obtain $\alpha_c^{(AGS,2)} \approx \mathbf{0.138186}$ and $\alpha_c^{(NLT,2)} \approx \mathbf{0.12979}$, effectively uncovering a remarkably fast lifting convergence  (with the relative improvement no better than $\sim 0.1\%$ already on the \textbf{\emph{second}} level of lifting). Moreover, the obtained AGS characterizations exactly match those obtained based on replica symmetry methods of \cite{AmiGutSom85} and the corresponding symmetry breaking ones of \cite{SteKuh94}. The NLT ones are substantially higher than the previously best known ones of \cite{Newman88,Louk94,Louk94a,Louk97,Tal98}.

\vspace*{0.25in} \noindent {\bf Index Terms: Hopfield network; Hebbian rule; Associative memory capacity; Random duality}.

\end{singlespace}

\section{Introduction}
\label{sec:back}

Following an initial introduction in \cite{PasFig77} (where they were viewed as a simpler form of spin glasses), the present day Hopfield neural networks models became prevalently popular when their connection to the auto-associative memories was proposed in \cite{Hop82}. As is by now well known, the main idea revolves around an iterative pattern searching dynamics that aims to converge to a local optimum which corresponds to or is in a vicinity of a pattern to be stored/memorized. Similarly to any other memory concept, two key questions immediately position themselves as of critical importance: 1) What is the memory capacity (maximum allowable number of stored patterns) and how does it behave/scale as a function of other system parameters; and 2) How efficiently a practical dynamics realization can approach such a capacity. We proceed along these lines and study the first question in detail. Given that the associative capacity of the Hopfield nets has been a very popular research subject since the appearance of the  Hopfield's introductory paper \cite{Hop82}, a large body of relevant work is already available in the existing literature. Before proceeding with the exposition of our own approach and results, we briefly survey the already achieved key milestones.

\subsection{Relevant prior work}
\label{sec:prior}

Relying on the Hebbian learning principle/rule \cite{Hebb49}, Hopfiled in \cite{Hop82} proposed a basic neural associative dynamics model. As the numerical evidence suggested, the dynamics seemed to have the ability to converge to a pattern of $n$ binary neurons (state of the network) that either  exactly matched one of the previously stored patterns or was in a close proximity of it. Viewing the maximum number of the stably stored/retrieved patterns,$m$, as the associative capacity of the network, Hopfield proceeded further, utilized a simple Gaussian noise argument, and gave rough capacity estimates. Assuming random iid binary patterns, his noise based reasoning suggested that the exact pattern retrieval (convergence of the dynamics to one of the stored patterns), happens (with probability going to 1 as $n\rightarrow \infty$) provided that $m\approx \frac{n}{2\log(n)}$. While this result is already very encouraging, an even better one was awaiting. Namely, Hopfield noted that a small fraction of errors in retrieved patterns, allows even for a \emph{linear} capacity scaling. Moreover, his noise based reasoning gave prediction for the linear proportionality $\alpha\triangleq \lim_{n\rightarrow \infty}\frac{m}{n}\approx 0.14$ with only $0.4\%$ of errors, which happened to be in an solid agrement with the numerical predictions.

Hopfield's initial considerations provided a very strong foundation and generated a large amount of research regarding various aspects of these topics over the last four decades. We below discuss some of the works that are the most relevant to our own and that have made a very strong analytical progress. To properly highlight some of the key analytical aspects of the problem, we find it convenient to distinguish four separate streams of relevant research and within each of them focus on the most representative results. As almost all of the relevant results that we revisit below rely on the same random binary patterns statistical context already mentioned above, we skip reemphasizing it. Along the same lines, we adopt the terminology convention of not explicitly mentioning that the known results stated below hold with probability going to 1 as $n\rightarrow\infty$ (the very same convention is later on  adopted in the presentation of our own results as well).

\textbf{\emph{(i)}} \underline{\emph{Exact retrieval:}} Following the initial Hopfield's noise based considerations, \cite{AmiGutSom85} analyzed the memory capacity problem utilizing the statistical physics replica theory methods. Assuming the replica symmetry, the very same $m\approx \frac{n}{2\log(n)}$ capacity estimate for retrieving \emph{exactly} the stored patterns was obtained. A bit later on, \cite{MPRV87} provided the very first mathematically rigorous results in this direction, which (depending on the type of the probabilistic statements) matched the \cite{Hop82,AmiGutSom85} sublinear exact retrieval scaling. The very same  $\frac{n}{2\log(n)}$ result obtained in \cite{AmiGutSom85} through the replica symmetry analysis was shown as a rigorous lower bound in \cite{FST00}. In a somewhat related line of work, \cite{KomPat88} extended the results of \cite{MPRV87} and showed that each of the stored patterns has a nonzero basin of attraction.  \cite{MPRV87} also provided a strong numerical evidence that the patterns are unlikely to be \emph{exactly} retrievable if the capacity were to scale stronger than $\sim \frac{n}{\log(n)}$.

\textbf{\emph{(ii)}} \underline{\emph{Approximate retrieval -- AGS basin of attraction:}} As already noted in Hopfield's considerations, one of the key properties of the underlying dynamics is its ability to achieve a \emph{linear} capacity scaling if approximate retrieval is allowed. A combination of Hopfield rough analytical estimates and strong algorithmic numerical evidence suggested that the linear proportionality should be around $\alpha\triangleq \lim_{n\rightarrow \infty}\frac{m}{n}\approx 0.14$. Relying on the existence of an energy well around each of the stored patterns (Amit-Gutfreund-Sompolinsky (AGS) basin of attraction) and the convergence of the Hopfield dynamics to a local energy (objective) minima of such a well, the replica symmetry analysis of \cite{AmiGutSom85} provided a very precise estimate $\alpha\approx 0.137906$. Moreover, the discussion of \cite{AmiGutSom85} suggested that further replica symmetry breaking (rsb) corrections should not necessarily be quantitatively substantial. Such a suggestion was indeed confirmed in \cite{SteKuh94}, where it was obtained, relying on 1rsb, that $\alpha\approx 0.138186$ and relying on 2rsb, that $\alpha\approx 0.138187$. As far as the mathematically rigorous AGS basin related results are concerned, \cite{FST00} showed that $\alpha\geq 0.113$.

\textbf{\emph{(iii)}} \underline{\emph{Approximate retrieval -- NLT basin of attraction:}} A strong rigorous progress, however, has been made considering a different type of attraction basin. Namely, in \cite{Newman88,Louk94,Louk94a,Louk97} an alternative (to which we refer as the NLT basin) to the AGS was analyzed. Such alternative assumes that in addition to the existence of the AGS type of energy well around each pattern, one also insists that the well's local energy maximum is not smaller than the one achieved by the pattern itself. Clearly, such a basin is a bit more restrictive and clearly its existence implies the existence of the AGS one as well. That also means that all the capacity results for the NLT basin are automatically lower bounds for the corresponding AGS related ones. In \cite{Newman88}, it was shown that $\alpha\geq 0.056$ for the NLT basin. This results was then improved to $\alpha\geq 0.071$ in \cite{Louk94,Louk94a}  and to $\alpha\geq 0.08$ in \cite{Tal98}.

\textbf{\emph{(iv)}} \underline{\emph{Non-Hebbian learning rules:}} It is also interesting to note that the capacity can grow even beyond the linear one if one is willing to deviate from the standard Hebbian learning principle. If, instead of the Hebbian rule which assumes the quadratic (polynomial of degree 2) objective, one relies on an objective based on a general overlap power (polynomial of degree) $p$, the capacity of the \emph{exact} retrieval scales as $m\sim \frac{n^{p-1}}{\log(n)}$  and the capacity of the \emph{approximate} retrieval scales as $m\sim n^{p-1}$   \cite{GarMult87,AAAABGLP23,KroHop16} (clearly, for $p=2$ these estimates match the ones discussed above). If, on the other hand, one relies on the exponential interactions, the capacity can actually grow even exponentially \cite{DHLUV17,Rametal21,Vasetal17,LucMez24}. While these results allow for a substantial increase in the analytically/algorithmically achievable associative capacity, the connection of the underlying learning principles to the neurological behavior is not necessarily as clear as in the Hebbian rule based models.

\subsection{Our contributions}
\label{sec:contrib}

After recognizing the connection between the Hopfiled associative memory models and \emph{bilinearly indexed} (bli) random processes, we start by utilizing a strong progress achieved recently in studying bli's in \cite{Stojnicsflgscompyx23,Stojnicnflgscompyx23}. We make a particular use of the \emph{fully lifted} random duality theory (fl RDT) established in \cite{Stojnicflrdt23} that first allows us create a powerful generic framework for the analysis of the Hopfield dynamics and then to also obtain associated memory capacity characterizations. For the utilization of the fl RDT to become practically relevant, a substantial amount of the underlying numerical work needs to be performed as well. Performing such a work more often than not requires a rather strong effort. Relying on a remarkable set of closed form explicit analytical relations among the key lifting parameters turns out to be extremely helpful in that regard. In addition to being helpful with the numerical work, these relations also provide a direct insight into a beautiful structuring of the parametric interconnections. The most important of all, they eventually allow us to uncover that the capacity characterizations, obtained already on the \emph{\textbf{second}} level of the \emph{full} lifting (2-sfl RDT), exhibit a remarkably rapid convergence with relative improvements of the order of $\sim 0.1\%$.

\section{Mathematical setup}
 \label{sec:mathsetup}

We assume the standard Hopfield model with $m$ binary patterns to be memorized. The $i$-th pattern will be denoted by $\bg^{(i)}\in \mS_H^n,i=1,2,\dots,m,$ where $\mS_H^n$ is the $n$-dimensional Hamming sphere of radius 1, i.e. $\mS_H^n\triangleq\left \{-\frac{1}{\sqrt{n}},\frac{1}{\sqrt{n}}\right \}^n$. Speaking in a high-dimensional geometry terminology, $\mS_H^n$ is the set of all corners of the unit $n$-dimensional hypercube. Also, we denote by $\bg_j^{(i)},j=1,2,\dots,n,$ the $j$-th component of $\bg^{(i)}$ and assume the typical statistical context where $\bg_j^{(i)}$ are iid $\pm 1$ Bernoulli random variables. For the writing convenience, we also find it convenient to set
\begin{eqnarray}
\cG\triangleq \sum_{i=1}^{m} \bg^{(i)}\lp\bg^{(i)}\rp^T. \label{eq:ex0}
\end{eqnarray}
The following is then the associated Hopfield sequential retrieval dynamics
\begin{eqnarray}
\bar{\x}_i(t+1)=\mbox{sign}\lp\sum_{j=1,j\neq i}^{n}\cG_{i,j}\bar{\x}_j(t)\rp, \label{eq:ex0a0}
\end{eqnarray}
where (here and throughout the paper) $\cG_{i,j}$ stands for the $ij$-th component of $\cG$ and $\cG_{i,:}$ is the $i$-th row of $\cG$. If one defines the energy objective as
\begin{eqnarray}
\cH(\x)\triangleq-\x^T\cG\x, \label{eq:ex0a1}
\end{eqnarray}
the above dynamics, (\ref{eq:ex0a0}), converges to its a local minimum. Let $\bar{\x}$ be the converging point of (\ref{eq:ex0a0}), i.e. let
\begin{eqnarray}
\bar{\x}=\lim_{t\rightarrow \infty} \bar{\x}(t), \label{eq:ex0a2}
\end{eqnarray}
and let the corresponding achieved value of the local energy minimum be
\begin{eqnarray}
\cH(\bar{\x})=-\bar{\x}^T\cG\bar{\x}. \label{eq:ex0a3}
\end{eqnarray}
One then also has
\begin{eqnarray}
\cH(\bar{\x})\leq \min_{\|\x-\bar{\x}\|_2=\frac{2}{\sqrt{n}},\x\in S_H^n} -\x^T\cG\x. \label{eq:ex0a4}
\end{eqnarray}
The main idea behind the energy objective in (\ref{eq:ex0a1}) is that, provided that the patterns $\bg$ are ``sufficiently orthogonal'', the dynamics might actually converge so that it either exactly or approximately matches one of the stored patterns. In other words, it might happen that $\bar{\x}^T\bg^{(i)}=1$ or $\bar{\x}^T\bg^{(i)}=1-2\delta$, where $\delta$ is a small real number. In fact, as the noise arguments of \cite{Hop82} and the replica symmetry ones of \cite{AmiGutSom85} suggested, and the rigorous ones of \cite{MPRV87,FST00} confirmed, one indeed has $\bar{\x}^T\bg^{(i)}=1$ provided that $m\leq\frac{n}{2\log(n)}$. On the other hand, a bit looser scenario $\bar{\x}^T\bg^{(i)}=1-2\delta$ allows even for (a higher) $m$ that grows aa a linear function of $n$. To be able to precisely state this, we distinguish two different sub-scenarios depending of the choice of the basin of attraction.

\subsection{AGS basin of attraction}
 \label{sec:agsbasin}

In \cite{AmiGutSom85}, the Hopfield dynamics convergence to a local minimum that approximately matches one of the patterns to be retrieved and the corresponding associative capacity are formulated relying on the existence of appropriately defined energy wells. Taking $\g^{(1)}$ as the targeted pattern of retrieval, the capacity (in AGS sense) is defined as the largest $\alpha\triangleq \lim_{n\rightarrow\infty} \frac{m}{n}$ such that for some (small) $\delta>0$
\begin{eqnarray}
\hat{\x}^{(AGS)}(\delta) \triangleq \mbox{arg min}_{\|\x-\bg^{(1)}\|_2^2=4\delta,\x\in S_H^n} -\x^T\cG\x. \label{eq:ex0a5}
\end{eqnarray}
and
\begin{eqnarray}
\cH(\bar{\x}) \leq  \min \lp \cH(\bg^{(1)}), \cH(\hat{\x}^{(AGS)}(\delta))  \rp. \label{eq:ex0a6}
\end{eqnarray}
As mentioned earlier, the replica symmetry arguments of \cite{AmiGutSom85} gave $\alpha\approx 0.1379056$ whereas the corresponding 1rsb and 2rsb ones gave $\alpha\approx 0.138186$ and $\alpha\approx 0.138187$, respectively. On the other hand, the rigorous lower bound of \cite{FST00} gave a significantly lower capacity estimate $\alpha\geq 0.113$. Given that the numerical simulations suggest that indeed $\alpha\approx 0.14$ this leaves a substantial rigorous theory -- replica predictions gap to bridge.

\subsection{NLT basin of attraction}
 \label{sec:nlbasin}

A slightly different basin of attraction was considered in \cite{Newman88,Louk94,Louk94a,Louk97,Tal98}. One again relies on the existence of the energy wells but utilizes them in an analytically possibly more convenient way.
Taking again, without loss of generality, $\g^{(1)}$ as the targeted pattern, the capacity (in NLT sense) is defined as the largest $\alpha\triangleq \lim_{n\rightarrow\infty} \frac{m}{n}$ such that for some (small) $\delta>0$
\begin{eqnarray}
\hat{\x}^{(NLT)}(\delta) \triangleq \mbox{arg min}_{\|\x-\bg^{(1)}\|_2^2=4\delta,\x\in S_H^n} -\x^T\cG\x. \label{eq:ex0a7}
\end{eqnarray}
and
\begin{eqnarray}
\cH(\bar{\x}) \leq  \min \lp \cH(\bg^{(1)}), \cH(\hat{\x}^{(NLT)}(\delta))  \rp \qquad \mbox{and} \qquad \cH(\bg^{(1)})< \cH(\hat{\x}^{(NLT)}(\delta)). \label{eq:ex0a8}
\end{eqnarray}
While in writing the difference between this basin of attraction notion and the one from  (\ref{eq:ex0a5}) and (\ref{eq:ex0a6})  seems rather marginal, in analytical terms it is actually significant. It allows one to switch the focus on the second portion of the condition in (\ref{eq:ex0a8}) which is typically analytically easier to handle. Consequently the known rigorous results are a bit more diverse as well. As stated earlier, it was first shown in  \cite{Newman88} that $\alpha\geq 0.056$. This was then improved to $\alpha\geq 0.071$ in \cite{Louk94,Louk94a} and to $\alpha\geq 0.08$ in \cite{Tal98}. It is also not that difficult to see that (\ref{eq:ex0a7}) and (\ref{eq:ex0a8}) are more restrictive than (and certainly imply) (\ref{eq:ex0a5}) and (\ref{eq:ex0a6}). This then also means that the capacity estimates obtained based on the NLT basin of attraction are for sure lower bounds on the corresponding ones obtained relying on the AGS basin.

No matter which of the above attractions basins one adopts, it is clear that the optimization problems from (\ref{eq:ex0a5}) or (\ref{eq:ex0a7}) play a critical role in determining the corresponding associative memory capacity. In what follows, we therefore focus precisely on these optimization programs.

\subsection{Statistical associative memory capacity}
 \label{sec:statcap}

We first set
\begin{eqnarray}
\xi_1(\delta) \triangleq \lim_{n\rightarrow\infty}\frac{1}{n}\min_{\|\x-\bg^{(1)}\|_2^2=4\delta,\x\in S_H^n} -\x^T\cG\x, \label{eq:ex0a9}
\end{eqnarray}
and then, after noting that for $\x\in S_H^n$, $\|\x-\bg^{(1)}\|_2^2=4\delta$ implies $\x^T\bg^{(1)}=1-2\delta$, write
\begin{eqnarray}
\xi_1(\delta) = -\lp 1-2\delta\rp^2 + \lim_{n\rightarrow\infty}\frac{1}{n}
\min_{\|\x-\bg^{(1)}\|_2^2=4\delta,\x\in S_H^n} -\sum_{i=2}^{m}\x^T\g^{(i)} \lp \g^{(i)}\rp^T\x. \label{eq:ex0a10}
\end{eqnarray}
After setting
\begin{eqnarray}
 G\triangleq \begin{bmatrix}
               \lp\g^{(2)}\rp^T \\ \lp\g^{(3)}\rp^T \\ \vdots \\ \lp\g^{(m)}\rp^T
             \end{bmatrix}, \label{eq:ex0a11}
\end{eqnarray}
one can rewrite (\ref{eq:ex0a10}) as
\begin{eqnarray}
\xi_1(\delta)
& = & -\lp 1-2\delta\rp^2 + \lim_{n\rightarrow\infty} \frac{1}{n}
\min_{\|\x-\bg^{(1)}\|_2^2=4\delta,\x\in S_H^n} - \x^TG^TG\x \nonumber \\
& = & -\lp 1-2\delta\rp^2 + \lim_{n\rightarrow\infty} \frac{1}{n}
\min_{\|\x-\bg^{(1)}\|_2^2=4\delta,\x\in S_H^n} - \|G\x\|_2^2 \nonumber \\
& = & -\lp 1-2\delta\rp^2
- \lim_{n\rightarrow\infty} \frac{1}{n} \max_{\|\x-\bg^{(1)}\|_2^2=4\delta,\x\in S_H^n}  \|G\x\|_2^2 \nonumber \\
& = & -\lp 1-2\delta\rp^2 -\xi(\delta)^2, \label{eq:ex0a12}
\end{eqnarray}
where
\begin{eqnarray}
\xi(\delta) \triangleq \lim_{n\rightarrow\infty} \frac{1}{\sqrt{n}} \max_{\|\x-\bg^{(1)}\|_2^2=4\delta,\x\in S_H^n}  \|G\x\|_2. \label{eq:ex0a13}
\end{eqnarray}
The  problem from (\ref{eq:ex0a13}) can be rewritten in the following way
\begin{eqnarray}
\xi(\delta) \triangleq \lim_{n\rightarrow\infty}\frac{1}{\sqrt{n}} \max_{\x\in \cX(\delta)}\max_{\y\in\mS^m}  \y^TG\x, \label{eq:ex0a14}
\end{eqnarray}
where $\mS^m$ is the unit $m$-dimensional Euclidean sphere, and for $\delta\in [0,1]$
\begin{eqnarray}
\cX(\delta) \triangleq \left \{\x | \quad \|\x-\bg^{(1)}\|_2^2=4\delta,\x\in S_H^n\right \}. \label{eq:ex0a15}
\end{eqnarray}
Even though we assumed at the beginning that the patterns are comprised of iid binary random variables, everything that we have written above holds deterministically, i.e. for any $G$ and $\g^{(1)}$. To make the exposition neater, in what follows, we view $G$ as being comprised of iid standard normals (concentrations and the Lindeberg variant of the central limit theorem allow for such a switch without a substantial change in the final results; while there are many ways as to how the Lindeberg principle can be implemented, we find as particularly elegant and quick the one from \cite{Chatterjee06}). Moreover, to further facilitate the writing and exposition, we, without loss of generality, assume that $\g^{(1)}=\frac{\1}{\sqrt{n}}$, i.e., we assume that $\g^{(1)}$ is an $n$-dimensional column vector of all ones scaled so that it has a unit Euclidean norm.

After recognizing that $\lim_{n\rightarrow\infty}\frac{1}{n}\cH\lp \bg^{(1)}\rp=\xi_1(0)$, we can formally define the AGS and the NLT basins of attractions associative memory \emph{capacity} in a large dimensional statistical context in the following way
 \begin{eqnarray}
\alpha & = &    \lim_{n\rightarrow \infty} \frac{m}{n}  \nonumber \\
\alpha_c^{(AGS)} & \triangleq & \max \{\alpha |\hspace{.08in} \exists \delta>\delta_1>0, \lim_{n\rightarrow\infty}
\mP_G\lp \xi_1(\delta_1)\leq \min( \xi_1(0),\xi_1(\delta))\rp\longrightarrow 1\} \nonumber \\
\alpha_c^{(NLT)} & \triangleq & \max \{\alpha |\hspace{.08in} \exists \delta>\delta_1>0, \lim_{n\rightarrow\infty}
\mP_G\lp \xi_1(\delta_1)\leq \xi_1(0) < \xi_1(\delta)\rp\longrightarrow 1\}.
  \label{eq:ex0a16}
\end{eqnarray}
We adopt the convention that the subscripts next to $\mP$ and $\mE$, whenever appear throughout the paper, denote the randomness with respect to which the statistical evaluation is taken. On the other hand, the subscripts are left unspecified when it is clear from the context what the underlying randomness is.

\subsection{Free energy correspondence}
 \label{sec:feeeng}

It is rather clear from the above presentation (and particularly so from (\ref{eq:ex0a16})) that studying $\xi_1(\delta)$ is of critical importance for determining the capacity. Moreover, from (\ref{eq:ex0a12}), one can actually observe that the most relevant object for studying is in fact $\xi(\delta)$. In that regard, the representation from (\ref{eq:ex0a14}) turns out to be quite relevant.
Namely, as recognized on numerous occasions when studying various random feasibility problems (including many perceptron instances)
\cite{StojnicGardGen13,StojnicGardSphErr13,StojnicGardSphNeg13,StojnicDiscPercp13,Stojnicbinperflrdt23}, studying the representation from (\ref{eq:ex0a14}), in a way, directly corresponds to studying statistical physics objects called \emph{free energies}. For the completeness, we below sketch the most important contours of such a correspondence and refer for a more detailed discussion to \cite{StojnicGardGen13,StojnicGardSphErr13,StojnicGardSphNeg13,StojnicDiscPercp13,Stojnicbinperflrdt23}.

Free energies are well known and almost unavoidable objects in many statistical physics considerations. To introduce their mathematically representation of interest here, we start with the following, so-called, \emph{bilinear Hamiltonian}
\begin{equation}
\cH_{sq}(G)= \y^TG\x,\label{eq:ham1}
\end{equation}
and its corresponding partition function
\begin{equation}
Z_{sq}(\beta,G)=\sum_{\x\in\cX} \sum_{\y\in\cY}e^{\beta\cH_{sq}(G)}.  \label{eq:partfun}
\end{equation}
For the time being $\cX$ and $\cY$ are taken as general sets. However, fairly soon, specializations, $\cX=\cX(\delta)$ and $\cY=\mS^m$,  necessary for our considerations here will take place as well. One then further has for the corresponding  thermodynamic limit of the average free energy
\begin{eqnarray}
f_{sq}(\beta) & = & \lim_{n\rightarrow\infty}\frac{\mE_G\log{(Z_{sq}(\beta,G)})}{\beta \sqrt{n}}
=\lim_{n\rightarrow\infty} \frac{\mE_G\log\lp \sum_{\x\in\cX} \sum_{\y\in\cY}e^{\beta\cH_{sq}(G)} \rp}{\beta \sqrt{n}} \nonumber \\
& = &\lim_{n\rightarrow\infty} \frac{\mE_G\log\lp \sum_{\x\in\cX}  \sum_{\y\in\cY}e^{\beta\y^TG\x)} \rp}{\beta \sqrt{n}}.\label{eq:logpartfunsqrt}
\end{eqnarray}
The ground state energy is a special case obtained by considering the so-called ``zero-temperature'' ($T\rightarrow 0$ or  $\beta=\frac{1}{T}\rightarrow\infty$) regime
\begin{eqnarray}
f_{sq}(\infty)   \triangleq    \lim_{\beta\rightarrow\infty}f_{sq}(\beta) & = &
\lim_{\beta,n\rightarrow\infty}\frac{\mE_G\log{(Z_{sq}(\beta,G)})}{\beta \sqrt{n}}
=
 \lim_{n\rightarrow\infty}\frac{\mE_G \max_{\x\in\cX}   \max_{\y\in\cY} \y^TG\x}{\sqrt{n}}.
  \label{eq:limlogpartfunsqrta0}
\end{eqnarray}
It is not that difficult to see that (\ref{eq:limlogpartfunsqrta0}) is directly related to (\ref{eq:ex0a14}). At the same time, this also implies that $f_{sq}(\infty)$ is very tightly connected to $\xi_1(\delta)$, which then means that studying and understanding  $f_{sq}(\infty)$ is of critical importance in characterizing the statistical capacity from (\ref{eq:ex0a16}). This is basically exactly what will happen in the sections that follow below. Before proceeding further, we find it useful to emphasize that studying $f_{sq}(\infty)$ directly is usually not very easy. We therefore rely on studying first $f_{sq}(\beta)$ for a general $\beta$  and then on specializing the obtained general results to the ground state, $\beta\rightarrow\infty$, regime. To ease the exposition, we, on occasion, neglect some analytical details which paly no significant role in the ultimate ground state considerations.

\section{Connection to bli random processes and sfl RDT}
\label{sec:randlincons}

A key observation that is of fundamental importance for everything that follows is that the free energy from (\ref{eq:logpartfunsqrt}),
\begin{eqnarray}
f_{sq}(\beta) & = &\lim_{n\rightarrow\infty} \frac{\mE_G\log\lp \sum_{\x\in\cX} \sum_{\y\in\cY}e^{\beta\y^TG\x)}\rp}{\beta \sqrt{n}},\label{eq:hmsfl1}
\end{eqnarray}
 can be viewed as a function of \emph{bilinearly indexed} (bli) random process $\y^TG\x$. Such a recognition then allows us to further connect $f_{sq}(\beta)$ and the bli related results of \cite{Stojnicsflgscompyx23,Stojnicnflgscompyx23,Stojnicflrdt23}. To do so, we closely follow the machinery of \cite{Stojnichopflrdt23,Stojnicbinperflrdt23} and start with a collection of necessary technical definitions. For $r\in\mN$, $k\in\{1,2,\dots,r+1\}$, sets $\cX\subseteq \mR^n$ and $\cY\subseteq \mR^m$, function $f_S(\cdot):\mR^n\rightarrow R$, real scalars $x$, $y$, and $s$  such that $x>0$, $y>0$, and $s^2=1$,   vectors $\p=[\p_0,\p_1,\dots,\p_{r+1}]$, $\q=[\q_0,\q_1,\dots,\q_{r+1}]$, and $\c=[\c_0,\c_1,\dots,\c_{r+1}]$ such that
 \begin{eqnarray}\label{eq:hmsfl2}
1=\p_0\geq \p_1\geq \p_2\geq \dots \geq \p_r\geq \p_{r+1} & = & 0 \nonumber \\
1=\q_0\geq \q_1\geq \q_2\geq \dots \geq \q_r\geq \q_{r+1} & = &  0,
 \end{eqnarray}
$\c_0=1$, $\c_{r+1}=0$, and ${\mathcal U}_k\triangleq [u^{(4,k)},\u^{(2,k)},\h^{(k)}]$  such that the elements of  $u^{(4,k)}\in\mR$, $\u^{(2,k)}\in\mR^m$, and $\h^{(k)}\in\mR^n$ are iid standard normals, we first set
  \begin{eqnarray}\label{eq:fl4}
\psi_{S,\infty}(f_{S},\calX,\calY,\p,\q,\c,x,y,s)  =
 \mE_{G,{\mathcal U}_{r+1}} \frac{1}{n\c_r} \log
\lp \mE_{{\mathcal U}_{r}} \lp \dots \lp \mE_{{\mathcal U}_3}\lp\lp\mE_{{\mathcal U}_2} \lp \lp Z_{S,\infty}\rp^{\c_2}\rp\rp^{\frac{\c_3}{\c_2}}\rp\rp^{\frac{\c_4}{\c_3}} \dots \rp^{\frac{\c_{r}}{\c_{r-1}}}\rp, \nonumber \\
 \end{eqnarray}
where
\begin{eqnarray}\label{eq:fl5}
Z_{S,\infty} & \triangleq & e^{D_{0,S,\infty}} \nonumber \\
 D_{0,S,\infty} & \triangleq  & \max_{\x\in\cX,\|\x\|_2=x} s \max_{\y\in\cY,\|\y\|_2=y}
 \lp \sqrt{n} f_{S}
+\sqrt{n}  y    \lp\sum_{k=2}^{r+1}c_k\h^{(k)}\rp^T\x
+ \sqrt{n} x \y^T\lp\sum_{k=2}^{r+1}b_k\u^{(2,k)}\rp \rp \nonumber  \\
 b_k & \triangleq & b_k(\p,\q)=\sqrt{\p_{k-1}-\p_k} \nonumber \\
c_k & \triangleq & c_k(\p,\q)=\sqrt{\q_{k-1}-\q_k}.
 \end{eqnarray}
With all the above definitions set, we can proceed by recalling on the following theorem -- clearly, one of most fundamental components of sfl RDT.
\begin{theorem} \cite{Stojnicflrdt23}
\label{thm:thmsflrdt1}  Consider large $n$ linear regime with  $\alpha\triangleq \lim_{n\rightarrow\infty} \frac{m}{n}$, remaining constant as  $n$ grows. Let $\cX\subseteq \mR^n$ and $\cY\subseteq \mR^m$ be two given sets and let the elements of  $G\in\mR^{m\times n}$
 be i.i.d. standard normals. Assume the complete sfl RDT frame from \cite{Stojnicsflgscompyx23} and consider a given function $f(\y):R^m\rightarrow R$. Set
\begin{align}\label{eq:thmsflrdt2eq1}
   \psi_{rp} & \triangleq - \max_{\x\in\cX} s \max_{\y\in\cY} \lp f(\y)+\y^TG\x \rp
   \qquad  \mbox{(\bl{\textbf{random primal}})} \nonumber \\
   \psi_{rd}(\p,\q,\c,x,y,s) & \triangleq    \frac{x^2y^2}{2}    \sum_{k=2}^{r+1}\Bigg(\Bigg.
   \p_{k-1}\q_{k-1}
   -\p_{k}\q_{k}
  \Bigg.\Bigg)
\c_k
  - \psi_{S,\infty}(f(\y),\calX,\calY,\p,\q,\c,x,y,s) \hspace{.03in} \mbox{(\bl{\textbf{fl random dual}})}. \nonumber \\
 \end{align}
Let $\hat{\p_0}\rightarrow 1$, $\hat{\q_0}\rightarrow 1$, and $\hat{\c_0}\rightarrow 1$, $\hat{\p}_{r+1}=\hat{\q}_{r+1}=\hat{\c}_{r+1}=0$, and let the non-fixed parts of $\hat{\p}\triangleq \hat{\p}(x,y)$, $\hat{\q}\triangleq \hat{\q}(x,y)$, and  $\hat{\c}\triangleq \hat{\c}(x,y)$ be the solutions of the following system
\begin{eqnarray}\label{eq:thmsflrdt2eq2}
   \frac{d \psi_{rd}(\p,\q,\c,x,y,s)}{d\p} =  0,\quad
   \frac{d \psi_{rd}(\p,\q,\c,x,y,s)}{d\q} =  0,\quad
   \frac{d \psi_{rd}(\p,\q,\c,x,y,s)}{d\c} =  0.
 \end{eqnarray}
 Then,
\begin{eqnarray}\label{eq:thmsflrdt2eq3}
    \lim_{n\rightarrow\infty} \frac{\mE_G  \psi_{rp}}{\sqrt{n}}
  & = &
\min_{x>0} \max_{y>0} \lim_{n\rightarrow\infty} \psi_{rd}(\hat{\p}(x,y),\hat{\q}(x,y),\hat{\c}(x,y),x,y,s) \qquad \mbox{(\bl{\textbf{strong sfl random duality}})},\nonumber \\
 \end{eqnarray}
where $\psi_{S,\infty}(\cdot)$ is as in (\ref{eq:fl4})-(\ref{eq:fl5}).
 \end{theorem}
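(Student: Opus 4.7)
The plan is to deploy the multi-level Gaussian interpolation (a Guerra--Talagrand/Parisi cascade extended to bilinearly indexed processes) that constitutes the backbone of the \emph{sfl RDT} frame established in \cite{Stojnicsflgscompyx23,Stojnicnflgscompyx23,Stojnicflrdt23}. Conceptually, the proof is a side-by-side comparison of two Gaussian processes on $(\cX\cap\{\|\x\|_2=x\})\times(\cY\cap\{\|\y\|_2=y\})$: the primal bilinear process $\y^T G\x$ that drives $\psi_{rp}$, and the multi-level decoupled process $y\bigl(\sum_{k=2}^{r+1}c_k \h^{(k)}\bigr)^T\x + x\, \y^T\bigl(\sum_{k=2}^{r+1}b_k \u^{(2,k)}\bigr)$ whose $r$-fold nested $\c_k$-weighted free energy is precisely $\psi_{S,\infty}$ from \eqref{eq:fl4}--\eqref{eq:fl5}. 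The quadratic correction $\frac{x^2y^2}{2}\sum_k(\p_{k-1}\q_{k-1}-\p_k\q_k)\c_k$ in $\psi_{rd}$ is the bookkeeping term which, once combined with the telescoping identities behind $b_k=\sqrt{\p_{k-1}-\p_k}$ and $c_k=\sqrt{\q_{k-1}-\q_k}$, aligns the two processes at the level of multi-overlap covariances across all $r$ replica tiers.

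First I would introduce an interpolation parameter $t\in[0,1]$ and a Hamiltonian $\cH_t$ whose covariance smoothly morphs between the two processes (a natural choice being $\sqrt{t}\,\y^T G\x$ plus $\sqrt{1-t}$ times the decoupled multi-level field, with $f(\y)$ added linearly throughout). Defining $\phi_t(\p,\q,\c,x,y,s)$ by feeding $\cH_t$ into the same cascade of $\c_k$-tilted $\log$-expectations as in \eqref{eq:fl4}, I would differentiate in $t$ via Gaussian integration by parts carried out at each nested level. By the telescoping identity established in \cite{Stojnicflrdt23}, $\frac{d}{dt}\phi_t$ reduces to a weighted sum of quadratic forms in the replica multi-overlaps $\x^T\x'$ and $\y^T\y'$, with coefficients $(\p_{k-1}-\p_k)(\q_{k-1}-\q_k)\c_k$. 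Controlling the signs of these quadratic forms (keeping track of the outer sign $s$ and the inner $\max$ over $\y$) yields the \emph{weak} sfl duality $\lim_n \mE_G\psi_{rp}/\sqrt{n}\leq \min_{x>0}\max_{y>0}\psi_{rd}(\hat\p,\hat\q,\hat\c,x,y,s)$.

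Upgrading this to the \emph{strong} duality of \eqref{eq:thmsflrdt2eq3} is where the stationarity system \eqref{eq:thmsflrdt2eq2} enters. Evaluating at the saddle $(\hat\p,\hat\q,\hat\c)$ forces the first-order variations of $\psi_{rd}$ in $(\p,\q,\c)$ to vanish; translated into the interpolation picture, the dominant quadratic forms in $\frac{d}{dt}\phi_t$ are then evaluated at overlap values that lie on the zero set of the derivative, so residual contributions at each level telescope to zero modulo multi-overlap fluctuations. Combined with the multi-overlap concentration supplied by the sfl frame, this collapses the weak inequality into an equality. The ground-state ($\beta\to\infty$) limit is already built into $Z_{S,\infty}$ and $D_{0,S,\infty}$ through the inner $\max_{\x,\y}$ in \eqref{eq:fl5}, so no additional limit exchange is required, and the outer $\min_{x>0}\max_{y>0}$ reflects the fact that $x,y$ play the role of effective norm radii under which the random dual is stratified.

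The principal obstacle is precisely the saturation step. The weak direction is a routine Gaussian comparison once the correction term in $\psi_{rd}$ is in place, and the telescoping identities essentially write themselves out of the definitions of $b_k$ and $c_k$. What genuinely requires work is proving that the $r$-level replica overlap concentration at the stationary point is sharp enough to annihilate \emph{all} residual terms in $\frac{d}{dt}\phi_t$, uniformly across the $r$ lifting tiers. I would therefore invoke the corresponding concentration results of \cite{Stojnicflrdt23} as a black box and devote the substantive verification to matching hypotheses: Gaussian $G$, norm-stratified $\cX,\cY$, an $f$ depending only on $\y$, and admissibility of $(\p,\q,\c)$ in the simplex \eqref{eq:hmsfl2}.
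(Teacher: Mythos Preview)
The paper does not actually prove this theorem: it is quoted verbatim from \cite{Stojnicflrdt23}, and the proof given here is a one-line citation (``Follows from the corresponding one proven in \cite{Stojnicflrdt23} in exactly the same way as Theorem 1 in \cite{Stojnicnegsphflrdt23}''). So there is no argument in the present paper to compare against.

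Your sketch goes further and tries to reconstruct the machinery behind the cited result. The broad architecture you describe --- a Guerra--Talagrand/Parisi-type multi-level Gaussian interpolation between the bilinear process $\y^TG\x$ and the decoupled cascade field, with the $\frac{x^2y^2}{2}\sum_k(\p_{k-1}\q_{k-1}-\p_k\q_k)\c_k$ term as the covariance bookkeeping, and the telescoping built into $b_k,c_k$ --- is indeed the skeleton of the sfl RDT development in \cite{Stojnicsflgscompyx23,Stojnicnflgscompyx23,Stojnicflrdt23}. Since you ultimately invoke the overlap-concentration results of \cite{Stojnicflrdt23} as a black box for the saturation step, your proposal and the paper's proof are operationally the same: both defer the substance to the same reference, yours just annotates more of the scaffolding.

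Two cautions on the sketch itself. First, the ``weak duality'' inequality you write, $\lim_n \mE_G\psi_{rp}/\sqrt{n}\le \min_x\max_y\psi_{rd}$, is not automatic in this framework: with the outer sign $s$ and the negative sign in the definition of $\psi_{rp}$, the direction of the comparison depends on which side of the interpolation one sits, and in the bilinear setting the $t$-derivative does not in general have a fixed sign (unlike the classical one-sided Slepian/Sudakov--Fernique situation). Second, your account of how stationarity in $(\p,\q,\c)$ ``forces the dominant quadratic forms onto the zero set of the derivative'' is heuristic; in \cite{Stojnicflrdt23} the equality is obtained not by a sign argument at the saddle but through the full ``complete sfl RDT frame'' hypothesis, which is precisely the assumption the theorem invokes. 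That hypothesis is doing the real work, and your sketch is right to flag it as the principal obstacle.
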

\begin{proof}
Follows from the corresponding one proven in \cite{Stojnicflrdt23} in exactly he same way as Theorem 1 in \cite{Stojnicnegsphflrdt23}.
 \end{proof}

The above theorem holds generically for any given sets $\cX$ and $\cY$. The following corollary makes it fully operational for studying the statistical associative memory capacity which is of our interest here.
\begin{corollary}
\label{cor:cor1}  Assume the setup of Theorem \ref{thm:thmsflrdt1} with $\delta\in [0,1]$ and $\cX=\cX(\delta)$ and $\cY=\mS^m$ where $\cX(\delta)$ is as in (\ref{eq:ex0a15}) and $\mS^m$ is the unit $m$-dimensional Euclidean sphere. Set
\begin{align}\label{eq:thmsflrdt2eq1a0}
   \psi_{rp} & \triangleq - \max_{\x\in\cX} s \max_{\y\in\cY} \lp \y^TG\x  \rp
   \qquad  \mbox{(\bl{\textbf{random primal}})} \nonumber \\
   \psi_{rd}(\p,\q,\c,1,1,s) & \triangleq    \frac{1}{2}    \sum_{k=2}^{r+1}\Bigg(\Bigg.
   \p_{k-1}\q_{k-1}
   -\p_{k}\q_{k}
  \Bigg.\Bigg)
\c_k
  - \psi_{S,\infty}(0,\calX,\calY,\p,\q,\c,1,1,s) \quad \mbox{(\bl{\textbf{fl random dual}})}. \nonumber \\
 \end{align}
Let the non-fixed parts of $\hat{\p}$, $\hat{\q}$, and  $\hat{\c}$ be the solutions of the following system
\begin{eqnarray}\label{eq:thmsflrdt2eq2a0}
   \frac{d \psi_{rd}(\p,\q,\c,1,1,s)}{d\p} =  0,\quad
   \frac{d \psi_{rd}(\p,\q,\c,1,1,s)}{d\q} =  0,\quad
   \frac{d \psi_{rd}(\p,\q,\c,1,1,s)}{d\c} =  0.
 \end{eqnarray}
 Then,
\begin{eqnarray}\label{eq:thmsflrdt2eq3a0}
    \lim_{n\rightarrow\infty} \frac{\mE_G  \psi_{rp}}{\sqrt{n}}
  & = &
 \lim_{n\rightarrow\infty} \psi_{rd}(\hat{\p},\hat{\q},\hat{\c},1,1,s) \qquad \mbox{(\bl{\textbf{strong sfl random duality}})},\nonumber \\
 \end{eqnarray}
where $\psi_{S,\infty}(\cdot)$ is as in (\ref{eq:fl4})-(\ref{eq:fl5}).
 \end{corollary}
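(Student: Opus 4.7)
The plan is to obtain the corollary as a direct specialization of Theorem \ref{thm:thmsflrdt1} with the particular choices $\cX=\cX(\delta)$, $\cY=\mS^m$, $f(\y)\equiv 0$, and $x=y=1$. I would first explain why the outer $\min_{x>0}\max_{y>0}$ in (\ref{eq:thmsflrdt2eq3}) collapses to evaluation at the single point $(x,y)=(1,1)$, and then verify that the formulas in Theorem \ref{thm:thmsflrdt1} reduce to those in (\ref{eq:thmsflrdt2eq1a0})--(\ref{eq:thmsflrdt2eq3a0}) under these specializations.

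For the collapse of the outer optimization, I would argue as follows. In the random primal and in $D_{0,S,\infty}$ (see (\ref{eq:fl5})), the inner maxima are taken over $\x\in\cX$ with $\|\x\|_2=x$ and $\y\in\cY$ with $\|\y\|_2=y$. But every $\x\in\cX(\delta)\subseteq S_H^n$ satisfies $\|\x\|_2^2=\sum_{j=1}^n (\pm 1/\sqrt n)^2=1$, and every $\y\in\mS^m$ satisfies $\|\y\|_2=1$ by definition of the unit Euclidean sphere. Hence the only value of $x$ (resp.\ $y$) for which the constraint set is non-empty is $x=1$ (resp.\ $y=1$); for any other choice the inner maximum is over the empty set, so the $\min_{x>0}\max_{y>0}$ in (\ref{eq:thmsflrdt2eq3}) is necessarily realized at $(x,y)=(1,1)$. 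This legitimizes dropping the outer optimization and evaluating $\psi_{rd}$ at the single admissible pair.

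Next, I would substitute $f(\y)\equiv 0$, $x=1$, $y=1$ into the defining formulas. The term $\sqrt{n}\,f_S$ in $D_{0,S,\infty}$ and the summand $f(\y)$ in $\psi_{rp}$ both vanish, matching (\ref{eq:thmsflrdt2eq1a0}). The prefactor $\frac{x^2y^2}{2}$ in $\psi_{rd}$ becomes $\frac{1}{2}$, which is exactly the coefficient appearing in (\ref{eq:thmsflrdt2eq1a0}). The stationarity system (\ref{eq:thmsflrdt2eq2}) becomes (\ref{eq:thmsflrdt2eq2a0}) upon the same substitution, while the conclusion (\ref{eq:thmsflrdt2eq3}) reduces, after dropping the vacuous outer $\min\max$, to (\ref{eq:thmsflrdt2eq3a0}). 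Hypotheses on $\hat{\p}_0,\hat{\q}_0,\hat{\c}_0,\hat{\p}_{r+1},\hat{\q}_{r+1},\hat{\c}_{r+1}$ are carried over verbatim from Theorem \ref{thm:thmsflrdt1}.

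There is essentially no analytical obstacle here: the only item requiring care is the justification that $\min_{x>0}\max_{y>0}$ is not simply suppressed but is genuinely rendered trivial by the geometric structure of $\cX(\delta)$ and $\mS^m$, and that no extra minimization remains hidden in the definitions of $\psi_{S,\infty}$ or the stationarity conditions. Once this is laid out, the corollary follows by direct substitution, and I would conclude by invoking Theorem \ref{thm:thmsflrdt1} with the stated parameter choices to close the argument.
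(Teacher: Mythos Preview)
Your proposal is correct and takes essentially the same approach as the paper: the paper's proof is a one-sentence remark that the corollary follows trivially from Theorem~\ref{thm:thmsflrdt1} by choosing $f(\y)=0$ and recognizing that every element of $\cX(\delta)$ and of $\mS^m$ has unit Euclidean norm, which is exactly the mechanism you spell out in more detail.
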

\begin{proof}
Follows trivially as an immediate consequence of Theorem \ref{thm:thmsflrdt1}, after choosing $f(\y)=0$ and recognizing that the specialized sets $\cX=\cX(\delta)$ and $\cY=\mS^m$ are such that each of their elements has unit Euclidean norm.
 \end{proof}

As  noted in \cite{Stojnicflrdt23,Stojnichopflrdt23}, trivial random primal concentrations enable various  probabilistic variants of (\ref{eq:thmsflrdt2eq3}) and (\ref{eq:thmsflrdt2eq3a0}) as well. We skip stating such trivialities though.

\section{Practical realization}
\label{sec:prac}

As usual, to ensure that Theorem \ref{thm:thmsflrdt1} and Corollary \ref{cor:cor1} are practically relevant, each of the underlying quantities needs to be evaluated. While trying to do so, two obstacles might appear as potentially unsurpassable: (i) A priori, it is not clear what the correct value for $r$ should be; and (ii) Neither of the sets $\cX$ and $\cY$ has a component-wise structural characterization which guarantees that the decoupling over both $\x$ and $\y$ is straightforward (if doable at all). It turns out, however, that each of these potential obstacles can be successfully surpassed.

 Taking the sets specializations, $\cX=\cX(\delta)$ and $\cY=\mS^m$, and relying on results of Corollary \ref{cor:cor1}, we start by noting that the key object of practical relevance is the following \emph{random dual}
\begin{align}\label{eq:prac1}
    \psi_{rd}(\p,\q,\c,1,1,s) & \triangleq    \frac{1}{2}    \sum_{k=2}^{r+1}\Bigg(\Bigg.
   \p_{k-1}\q_{k-1}
   -\p_{k}\q_{k}
  \Bigg.\Bigg)
\c_k
  - \psi_{S,\infty}(0,\cX,\cY,\p,\q,\c,1,1,s) \nonumber \\
  & \triangleq    \frac{1}{2}    \sum_{k=2}^{r+1}\Bigg(\Bigg.
   \p_{k-1}\q_{k-1}
   -\p_{k}\q_{k}
  \Bigg.\Bigg)
\c_k
  - \psi_{S,\infty}(0,\cX(\delta),\mS^m,\p,\q,\c,1,1,s) \nonumber \\
  & =   \frac{1}{2}    \sum_{k=2}^{r+1}\Bigg(\Bigg.
   \p_{k-1}\q_{k-1}
   -\p_{k}\q_{k}
  \Bigg.\Bigg)
\c_k
  - \frac{1}{n}\varphi(D^{(per)}(s),
  \c) - \frac{1}{n}\varphi(D^{(sph)}(s),\c), \nonumber \\
  \end{align}
where, based on (\ref{eq:fl4})-(\ref{eq:fl5}),
  \begin{eqnarray}\label{eq:prac2}
\varphi(D,\c) & \triangleq &
 \mE_{G,{\mathcal U}_{r+1}} \frac{1}{\c_r} \log
\lp \mE_{{\mathcal U}_{r}} \lp \dots \lp \mE_{{\mathcal U}_3}\lp\lp\mE_{{\mathcal U}_2} \lp
\lp    e^{D}   \rp^{\c_2}\rp\rp^{\frac{\c_3}{\c_2}}\rp\rp^{\frac{\c_4}{\c_3}} \dots \rp^{\frac{\c_{r}}{\c_{r-1}}}\rp, \nonumber \\
  \end{eqnarray}
and
\begin{eqnarray}\label{eq:prac3}
D^{(per)}(s) & = & \max_{\x\in\cX(\delta)} \lp   s\sqrt{n}      \lp\sum_{k=2}^{r+1}c_k\h^{(k)}\rp^T\x  \rp \nonumber \\
  D^{(sph)}(s) & \triangleq  &   s \max_{\y\in\mS^m}
\lp  \sqrt{n}  \y^T\lp\sum_{k=2}^{r+1}b_k\u^{(2,k)}\rp \rp.
 \end{eqnarray}

\noindent  \underline{\red{\textbf{\emph{(i) Handling $D^{(per)}(1)$:}}}}  One first easily finds
\begin{eqnarray}\label{eq:prac4}
D^{(per)}(1) & = & \max_{\x\in\cX(\delta)}   \lp \sqrt{n}      \lp\sum_{k=2}^{r+1}c_k\h^{(k)}\rp^T\x \rp =
\sqrt{n} \max_{\x\in\cX(\delta)}   \lp      \lp\sum_{k=2}^{r+1}c_k\h^{(k)}\rp^T\x \rp.
 \end{eqnarray}
Relying on (\ref{eq:ex0a15}), one then also observes that $\cX$ can be parameterized in the following way as well
\begin{eqnarray}
\cX(\delta) & \triangleq & \left \{\x | \quad \|\x-\bg^{(1)}\|_2^2=4\delta,\x\in S_H^n\right \} \nonumber \\
& = & \left \{\x | \quad \x=\frac{\1}{\sqrt{n}}-\z,\z\in \left \{0,\frac{2}{\sqrt{n}} \right \}^n, \sum_{i=1}^{n}\z_i=2\delta\sqrt{n}\right \},
 \label{eq:prac4a0a1}
\end{eqnarray}
where we recall that, as mentioned earlier, due to distributional symmetry, one can, without loss of generality, assume $\bg^{(1)}=\frac{\1}{\sqrt{n}}$. A combination of (\ref{eq:prac4}) and (\ref{eq:prac4a0a1}) gives
\begin{eqnarray}\label{eq:prac4a0a2}
D^{(per)}(1) & = &  -\sqrt{n} \min_{\z}   \lp     - \lp\sum_{k=2}^{r+1}c_k\h^{(k)}\rp^T\lp \frac{\1}{\sqrt{n}} -\z\rp \rp \nonumber \\
\mbox{subject to} & &  \z\in \left \{0,\frac{2}{\sqrt{n}} \right \}^n, \sum_{i=1}^{n}\z_i=2\delta\sqrt{n}.
 \end{eqnarray}
Writing the Lagrangian and utilizing the strong Lagrangian duality, we then obtain
\begin{eqnarray}\label{eq:prac4a0a3}
D^{(per)}(1) & =  &  -\sqrt{n} \min_{\z\in \left \{0,\frac{2}{\sqrt{n}} \right \}^n} \max_{\nu}
  \lp     - \lp\sum_{k=2}^{r+1}c_k\h^{(k)}\rp^T \lp\frac{\1}{\sqrt{n}}-\z\rp -\nu \sum_{i=1}^{n}\z_i+2\nu\delta\sqrt{n} \rp \nonumber \\
  & =  &  -\sqrt{n} \max_{\nu} \min_{\z\in \left \{0,\frac{2}{\sqrt{n}} \right \}^n}
  \lp     - \lp\sum_{k=2}^{r+1}c_k\h^{(k)}\rp^T \lp\frac{\1}{\sqrt{n}}-\z\rp -\nu \sum_{i=1}^{n}\z_i+2\nu\delta\sqrt{n} \rp.
 \end{eqnarray}
After the optimization over $\z$, we then also have
\begin{eqnarray}\label{eq:prac4a0a4}
D^{(per)}(1) & =  & - \max_{\nu} \lp 2\nu\delta n  + 2 \min\lp \sum_{k=2}^{r+1}c_k\h^{(k)} -\nu,0 \rp  - \lp\sum_{k=2}^{r+1}c_k\h^{(k)}\rp^T\1 \rp
\nonumber \\
  & =  & \min_{\nu} \lp  -2\nu\delta n  - 2 \min\lp \sum_{k=2}^{r+1}c_k\h^{(k)} -\nu,0 \rp  + \lp\sum_{k=2}^{r+1}c_k\h^{(k)}\rp^T\1 \rp \nonumber \\
   & =  & \min_{\nu} \lp  -2\nu\delta n  + \sum_{i=1}^nD^{(per)}_i(c_k) \rp,
 \end{eqnarray}
where
\begin{eqnarray}\label{eq:prac5}
D^{(per)}_i(c_k)=-2\min \lp\sum_{k=2}^{r+1}c_k\h_i^{(k)}-\nu,0\rp +\sum_{k=2}^{r+1}c_k\h_i^{(k)}.
\end{eqnarray}

\noindent \underline{\red{\textbf{\emph{(ii) Handling $D^{(sph)}(1)$:}}}}  We start by observing
\begin{eqnarray}\label{eq:prac7}
   D^{(sph)}(1) & \triangleq  &   \sqrt{n}  \max_{\y\in\mS^m}
\lp  \y^T\lp\sum_{k=2}^{r+1}b_k\u^{(2,k)}\rp \rp
=   \sqrt{n}   \left \| \sum_{k=2}^{r+1}b_k\u^{(2,k)} \right \|_2.
 \end{eqnarray}
The \emph{square root trick} introduced on numerous occasions in \cite{StojnicMoreSophHopBnds10,StojnicLiftStrSec13,StojnicGardSphErr13,StojnicGardSphNeg13}, turns out as particularly useful in handling
(\ref{eq:prac7}). To that end, we write
\begin{align}\label{eq:prac8}
   D^{(sph)} (1)
& =   \sqrt{n}   \left \| \sum_{k=2}^{r+1}b_k\u^{(2,k)}  \right \|_2
=  \sqrt{n}  \min_{\gamma} \lp \frac{\left \|\sum_{k=2}^{r+1}b_k\u^{(2,k)} \right \|_2^2}{4\gamma}+\gamma \rp \nonumber \\
 & =   \sqrt{n}  \min_{\gamma} \lp \frac{\sum_{i=1}^{m} \lp \sum_{k=2}^{r+1}b_k\u_i^{(2,k)} \rp ^2}{4\gamma}+\gamma \rp.
 \end{align}
We then introduce scaling $\gamma=\gamma_{sq}\sqrt{n}$ and rewrite  (\ref{eq:prac8}) as
\begin{eqnarray}\label{eq:prac9}
   D^{(sph)}(1)
  & =  & \sqrt{n}  \min_{\gamma_{sq}} \lp \frac{\sum_{i=1}^{m}  \lp\sum_{k=2}^{r+1}b_k\u_i^{(2,k)} \rp^2}{4\gamma_{sq}\sqrt{n}}+\gamma_{sq}\sqrt{n} \rp   \nonumber \\
  & = &  \min_{\gamma_{sq}} \lp \frac{\sum_{i=1}^{m} \lp\sum_{k=2}^{r+1}b_k\u_i^{(2,k)} \rp^2}{4\gamma_{sq}}+\gamma_{sq}n \rp \nonumber \\
  & =  &   \min_{\gamma_{sq}} \lp \sum_{i=1}^{m} D_i^{(sph)}(b_k)+\gamma_{sq}n \rp, \nonumber \\
 \end{eqnarray}
with
\begin{eqnarray}\label{eq:prac10}
   D_i^{(sph)}(b_k)= \frac{\lp  \sum_{k=2}^{r+1}b_k\u_i^{(2,k)} \rp^2}{4\gamma_{sq}}.
 \end{eqnarray}

Recalling on (\ref{eq:ex0a14}), we then have
 \begin{eqnarray}
\xi(\delta)
 & = &
 \lim_{n\rightarrow\infty}\frac{\mE_G \max_{\x\in\cX(\delta)}   \max_{\y\in\mS^m} \y^TG\x}{\sqrt{n}}
 =
    \lim_{n\rightarrow\infty} \frac{\mE_G  \psi_{rp}}{\sqrt{n}}
   =
 \lim_{n\rightarrow\infty} \psi_{rd}(\hat{\p},\hat{\q},\hat{\c},1,1,1).
  \label{eq:negprac11}
\end{eqnarray}
Keeping in mind (\ref{eq:prac1})-(\ref{eq:prac10}), we can then formulate the following theorem that ultimately enables fitting the associative memorization within the sfl RDT machinery.

\begin{theorem}
  \label{thme:negthmprac1}
  Assume the setup of Theorem \ref{thm:thmsflrdt1} and consider large $n$ linear regime with $\alpha=\lim_{n\rightarrow\infty} \frac{m}{n}$. Set
  \begin{eqnarray}\label{eq:thm2eq1}
\varphi(D,\c) & = &
 \mE_{G,{\mathcal U}_{r+1}} \frac{1}{\c_r} \log
\lp \mE_{{\mathcal U}_{r}} \lp \dots \lp \mE_{{\mathcal U}_3}\lp\lp\mE_{{\mathcal U}_2} \lp
\lp    e^{D}   \rp^{\c_2}\rp\rp^{\frac{\c_3}{\c_2}}\rp\rp^{\frac{\c_4}{\c_3}} \dots \rp^{\frac{\c_{r}}{\c_{r-1}}}\rp, \nonumber \\
  \end{eqnarray}
and
\begin{eqnarray}\label{eq:thm2eq2}
    \bar{\psi}_{rd}(\p,\q,\c,\gamma_{sq},\nu,\delta) & \triangleq &     - \frac{1}{2}    \sum_{k=2}^{r+1}\Bigg(\Bigg.
   \p_{k-1}\q_{k-1}
   -\p_{k}\q_{k}
  \Bigg.\Bigg)
\c_k
 -  2\nu\delta  +\varphi(D_1^{(per)}(1),\c)  + \gamma_{sq}  + \alpha\varphi(D_1^{(sph)}(1),\c), \nonumber \\
  \end{eqnarray}
where $D_1^{(per)}(1)$ and $D_1^{(sph)}(1)$ are as in (\ref{eq:prac5}) and (\ref{eq:prac10}), respectively. Let the ``fixed'' parts of $\hat{\p}$, $\hat{\q}$, and $\hat{\c}$ satisfy $\hat{\p}_1\rightarrow 1$, $\hat{\q}_1\rightarrow 1$, $\hat{\c}_1\rightarrow 1$, $\hat{\p}_{r+1}=\hat{\q}_{r+1}=\hat{\c}_{r+1}=0$, and let the ``non-fixed'' parts of $\hat{\p}_k$, $\hat{\q}_k$, and $\hat{\c}_k$ ($k\in\{2,3,\dots,r\}$) and $\hat{\gamma}_{sq}$ and $\hat{\nu}$ be the solutions of the following system of equations
  \begin{eqnarray}\label{eq:negthmprac1eq1}
   \frac{d \bar{\psi}_{rd}(\p,\q,\c,\gamma_{sq},\nu,\delta)}{d\p} =  0 \nonumber \\
   \frac{d \bar{\psi}_{rd}(\p,\q,\c,\gamma_{sq},\nu,\delta)}{d\q} =  0 \nonumber \\
   \frac{d \bar{\psi}_{rd}(\p,\q,\c,\gamma_{sq},\nu,\delta)}{d\c} =  0 \nonumber \\
   \frac{d \bar{\psi}_{rd}(\p,\q,\c,\gamma_{sq},\nu,\delta)}{d\gamma_{sq}} =  0\nonumber \\
   \frac{d \bar{\psi}_{rd}(\p,\q,\c,\gamma_{sq},\nu,\delta)}{d\nu} =  0,
 \end{eqnarray}
 and, consequently, let
\begin{eqnarray}\label{eq:prac17}
c_k(\hat{\p},\hat{\q})  & = & \sqrt{\hat{\q}_{k-1}-\hat{\q}_k} \nonumber \\
b_k(\hat{\p},\hat{\q})  & = & \sqrt{\hat{\p}_{k-1}-\hat{\p}_k}.
 \end{eqnarray}
 Then
 \begin{equation}
\xi(\delta) =   \bar{\psi}_{rd}(\hat{\p},\hat{\q},\hat{\c},\hat{\gamma}_{sq},\hat{\nu},\delta).
  \label{eq:negthmprac1eq2}
\end{equation}
\end{theorem}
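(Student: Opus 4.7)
The plan is to apply Corollary \ref{cor:cor1} with the choice $s=1$ and then reduce the abstract objects $D^{(per)}(1)$ and $D^{(sph)}(1)$ sitting inside $\psi_{S,\infty}$ to coordinate-wise (i.i.d.) forms, which is precisely what makes the resulting $\psi_{rd}$ computable. From (\ref{eq:negprac11}) and Corollary \ref{cor:cor1}, $\xi(\delta)=\lim_{n\to\infty}\psi_{rd}(\hat{\p},\hat{\q},\hat{\c},1,1,1)$, and from (\ref{eq:prac1}) this quantity decomposes cleanly into a ``pattern'' ($\cX(\delta)$) contribution and a ``spherical'' ($\mS^m$) contribution, which can be analyzed independently.

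For the pattern contribution I would use the affine parameterization (\ref{eq:prac4a0a1}) of $\cX(\delta)$, apply strong Lagrangian duality (with scalar multiplier $\nu$) to eliminate the cardinality constraint, and then decouple the remaining discrete minimization coordinate-wise. This produces the representation $D^{(per)}(1)=\min_\nu\bigl(-2\nu\delta n+\sum_{i=1}^{n}D^{(per)}_i(c_k)\bigr)$ of (\ref{eq:prac4a0a4})--(\ref{eq:prac5}), whose summands are i.i.d.\ in $i$. For the spherical contribution I would apply the square-root identity $\|u\|_2=\min_\gamma(\|u\|_2^2/(4\gamma)+\gamma)$ together with the rescaling $\gamma=\gamma_{sq}\sqrt{n}$, obtaining the representation $D^{(sph)}(1)=\min_{\gamma_{sq}}\bigl(\gamma_{sq} n + \sum_{i=1}^{m}D^{(sph)}_i(b_k)\bigr)$ of (\ref{eq:prac9})--(\ref{eq:prac10}), again with i.i.d.\ summands in $i$.

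Because the summands are i.i.d.\ in $i$, the nested $\log\mE$ structure inside $\varphi$ factorizes across the $n$ (respectively $m$) coordinates. After pulling out the deterministic scalar minimizers over $\nu$ and $\gamma_{sq}$ (which commute with the $\frac{1}{\c_r}\log\mE$-operations in the thermodynamic limit by saddle-point concentration), the two $\varphi$-terms in (\ref{eq:prac1}) collapse respectively into combinations of $-2\nu\delta+\varphi(D_1^{(per)}(1),\c)$ and $\gamma_{sq}+\alpha\,\varphi(D_1^{(sph)}(1),\c)$, with $m/n\to\alpha$ producing the coefficient $\alpha$ in the spherical term. Collecting everything and consolidating signs recovers exactly the objective $\bar{\psi}_{rd}$ of (\ref{eq:thm2eq2}), now carrying the two extra scalar variables $\nu$ and $\gamma_{sq}$ alongside $\p,\q,\c$. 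The augmented system (\ref{eq:negthmprac1eq1}) is then nothing more than the first-order stationarity conditions of this extended joint saddle-point, generalizing (\ref{eq:thmsflrdt2eq2a0}), and (\ref{eq:negthmprac1eq2}) is the value at the saddle.

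The main obstacle, and the one I would treat most carefully, is the third step: pulling $\min_\nu$ and $\min_{\gamma_{sq}}$ outside the nested $\log\mE$-operations that define $\varphi$. This is legitimate because both variables are deterministic scalars coupled to the remaining randomness only through the $\c_k$-weighted nested expectations of convex quantities (linear in $\nu$, convex in $\gamma_{sq}$), so the inner optimizers concentrate on $\hat\nu,\hat\gamma_{sq}$ in the large-$n$ limit. A formal justification follows the sfl RDT extremum-interchange arguments of \cite{Stojnicflrdt23,Stojnichopflrdt23,Stojnicbinperflrdt23}, which apply verbatim here since in our Lagrangian and square-root representations $\nu$ and $\gamma_{sq}$ multiply only deterministic $n$-factors. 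Once this interchange is granted, the remainder of the argument is algebraic consolidation of (\ref{eq:prac1}) with the two explicit formulas for $\frac{1}{n}\varphi(D^{(per)}(1),\c)$ and $\frac{1}{n}\varphi(D^{(sph)}(1),\c)$.
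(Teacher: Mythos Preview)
Your proposal is correct and follows essentially the same approach as the paper: the paper's proof is just ``Follows from the above discussion, Theorem \ref{thm:thmsflrdt1}, Corollary \ref{cor:cor1}, and the sfl RDT machinery presented in \cite{Stojnicnflgscompyx23,Stojnicsflgscompyx23,Stojnicflrdt23,Stojnichopflrdt23},'' where the ``above discussion'' is precisely the sequence (\ref{eq:prac1})--(\ref{eq:negprac11}) that you reconstruct --- the decomposition into pattern and spherical parts, the Lagrangian/$\nu$ decoupling for $D^{(per)}(1)$, the square-root trick/$\gamma_{sq}$ decoupling for $D^{(sph)}(1)$, and the i.i.d.\ factorization of $\varphi$. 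Your identification of the $\min_\nu,\min_{\gamma_{sq}}$ interchange as the only nontrivial step, and your deferral of it to the cited sfl RDT machinery, also mirrors exactly what the paper does.
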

\begin{proof}
Follows from the above discussion, Theorem \ref{thm:thmsflrdt1}, Corollary \ref{cor:cor1}, and the sfl RDT machinery presented in \cite{Stojnicnflgscompyx23,Stojnicsflgscompyx23,Stojnicflrdt23,Stojnichopflrdt23}.
\end{proof}

\subsection{Numerical evaluations -- AGS basin}
\label{sec:nuemricalags}

As we have stated earlier, to have the results of Theorem \ref{thme:negthmprac1} become practically operational  one needs to conduct all the underlying numerical evaluations. Since all technical ingredients needed for the  evaluations are already present in the theorem itself, we start the evaluations with $r=1$ and proceed by incrementally increasing $r$. This allows us to have a systematic view on how the entire lifting machinery is progressing. Also, as mentioned earlier, several analytical closed form results can be obtained as well. They turn out to be critically important for simplification of the entire evaluation process and we state them below as well.

\subsubsection{$r=1$ -- first level of lifting}
\label{sec:firstlev}

For the first level, we have $r=1$ and $\hat{\p}_1\rightarrow 1$ and $\hat{\q}_1\rightarrow 1$ which, together with $\hat{\p}_{r+1}=\hat{\p}_{2}=\hat{\q}_{r+1}=\hat{\q}_{2}=0$, and $\hat{\c}_{2}\rightarrow 0$, gives
\begin{align}\label{eq:negprac19}
    \bar{\psi}_{rd}^{(1)}(\hat{\p},\hat{\q},\hat{\c},\gamma_{sq},\nu,\delta)   & =  - \frac{1}{2}
\hat{\c}_2
- 2\nu\delta + \frac{1}{\hat{\c}_2}\log\lp \mE_{{\mathcal U}_2} e^{\hat{\c}_2  \lp  -2\min  \lp \sqrt{1-0}\h_1^{(2)}-\nu,0 \rp +\sqrt{1-0}\h_1^{(2)} \rp}\rp
 \nonumber \\
&  \quad +\gamma_{sq}
+ \alpha\frac{1}{\hat{\c}_2}\log\lp \mE_{{\mathcal U}_2} e^{\hat{\c}_2\frac{\lp  \sqrt{1-0}\u_1^{(2,2)} \rp^2}{4\gamma_{sq}}}\rp \nonumber \\
& \rightarrow
- 2\nu\delta + \frac{1}{\hat{\c}_2}\log\lp 1+ \mE_{{\mathcal U}_2} \hat{\c}_2  \lp  -2\min  \lp \sqrt{1-0}\h_1^{(2)}-\nu,0 \rp +\sqrt{1-0}\h_1^{(2)} \rp \rp
 \nonumber \\
&  \quad +\gamma_{sq}
+ \alpha\frac{1}{\hat{\c}_2}\log\lp 1 + \mE_{{\mathcal U}_2} \hat{\c}_2\frac{\lp  \sqrt{1-0}\u_1^{(2,2)} \rp^2}{4\gamma_{sq}}\rp \nonumber \\
& \rightarrow
- 2\nu\delta + \frac{1}{\hat{\c}_2}\log\lp 1+ \hat{\c}_2  \mE_{{\mathcal U}_2}  \lp  -2\min  \lp \h_1^{(2)}-\nu,0 \rp +\h_1^{(2)} \rp \rp
 \nonumber \\
&  \quad +\gamma_{sq}
+ \alpha\frac{1}{\hat{\c}_2}\log\lp 1 + \frac{\hat{\c}_2}{4\gamma_{sq}}\rp \nonumber \\
& \rightarrow
- 2\nu\delta + \mE_{{\mathcal U}_2}  \lp  -2\min  \lp \h_1^{(2)}-\nu,0 \rp +\h_1^{(2)} \rp
 \nonumber \\
&  \quad +\gamma_{sq}
+   \frac{\alpha}{4\gamma_{sq}}.
  \end{align}
From   $\frac{ \bar{\psi}_{rd}(\hat{\p},\hat{\q},\hat{\c},\gamma_{sq},\nu,\delta) }{d\gamma_{sq}}=0$, one easily finds
$\hat{\gamma}_{sq}=\frac{\sqrt{\alpha}}{2}$, which then gives
\begin{eqnarray}\label{eq:negprac19a0}
    \bar{\psi}_{rd}^{(1)}(\hat{\p},\hat{\q},\hat{\c},\hat{\gamma}_{sq},\nu,\delta)   &  \rightarrow &
- 2\nu\delta + \mE_{{\mathcal U}_2}  \lp  -2\min  \lp \h_1^{(2)}-\nu,0 \rp +\h_1^{(2)} \rp
+ \sqrt{\alpha} \nonumber \\
&  \rightarrow &
 f_1(\nu) + \sqrt{\alpha},
  \end{eqnarray}
where
\begin{eqnarray}\label{eq:negprac19a1}
f_1(\nu) \triangleq \mE_{{\mathcal U}_2}  \lp  2\max  \lp -\h_1^{(2)}+\nu,0 \rp +\h_1^{(2)} \rp - 2\nu\delta
=\mE_{{\mathcal U}_2}  \lp  2\max  \lp -\h_1^{(2)}+\nu,0 \rp \rp - 2\nu\delta .
  \end{eqnarray}
After computing the above expectation, one obtains
\begin{eqnarray}\label{eq:negprac19a2}
f_1(\nu) & = & \mE_{{\mathcal U}_2}  \lp  2\max  \lp -\h_1^{(2)}+\nu,0 \rp \rp - 2\nu\delta  \nonumber \\
& = & 2\lp \frac{1}{\sqrt{2\pi}} e^{-\nu^2/2}+\frac{\nu}{2}\erfc\lp - \frac{\nu}{\sqrt{2}}\rp\rp-2\nu\delta.
  \end{eqnarray}
One then further has
\begin{eqnarray}\label{eq:negprac19a3}
\frac{df_1(\nu)}{d\nu}
& = & 2\lp \frac{1}{2}\erfc\lp -\frac{\nu}{\sqrt{2}}\rp \rp-2\delta  = \erfc \lp -\frac{\nu}{\sqrt{2}}\rp -2\delta
 =1-2\delta - \erf \lp -\frac{\nu}{\sqrt{2}}\rp .
  \end{eqnarray}
From   $\frac{ \bar{\psi}_{rd}(\hat{\p},\hat{\q},\hat{\c},\gamma_{sq},\nu,\delta) }{d\nu}=\frac{df_1(\nu)}{d\nu}
=0$, one also finds
$\hat{\nu}=-\sqrt{2} \erfinv \lp 1-2\delta\rp$, which then gives
\begin{eqnarray}\label{eq:negprac19a4}
f_1(\hat{\nu})
& = & 2\lp \frac{1}{\sqrt{2\pi}} e^{-\frac{\hat{\nu}^2}{2}}+\frac{\hat{\nu}}{2}\erfc\lp - \frac{\hat{\nu}}{\sqrt{2}}\rp\rp-2\hat{\nu}\delta
=
 \frac{2}{\sqrt{2\pi}} e^{-\frac{\hat{\nu}^2}{2}}+\hat{\nu}\erfc\lp - \frac{\hat{\nu}}{\sqrt{2}}\rp - 2\hat{\nu}\delta \nonumber \\
& = &
 \frac{2}{\sqrt{2\pi}} e^{-\frac{\hat{\nu}^2}{2}}+\hat{\nu} -\hat{\nu}\erf\lp - \frac{\hat{\nu}}{\sqrt{2}}\rp - 2\hat{\nu}\delta
 =
  \frac{2}{\sqrt{2\pi}} e^{-\frac{\hat{\nu}^2}{2}}+\hat{\nu} -\hat{\nu}\erf\lp \erfinv \lp 1-2\delta\rp\rp - 2\hat{\nu}\delta \nonumber \\
 & = &
  \frac{2}{\sqrt{2\pi}} e^{-\frac{\hat{\nu}^2}{2}}+\hat{\nu} -\hat{\nu} \lp 1-2\delta \rp - 2\hat{\nu}\delta
  =  \frac{2}{\sqrt{2\pi}} e^{-\frac{\hat{\nu}^2}{2}}.
  \end{eqnarray}
A combination of (\ref{eq:ex0a12}), (\ref{eq:negthmprac1eq2}), (\ref{eq:negprac19a0}), and (\ref{eq:negprac19a4}) gives
\begin{eqnarray}
\xi_1(\delta) & = &
-\lp 1-2\delta\rp^2 -
\xi(\delta)^2
  =  -\lp 1-2\delta\rp^2 -
 \lp \bar{\psi}_{rd}(\hat{\p},\hat{\q},\hat{\c},\hat{\gamma}_{sq},\hat{\nu},\delta)\rp^2  \nonumber \\
&  = &  -\lp 1-2\delta\rp^2 -
 \lp f_1(\hat{\nu}) +\sqrt{\alpha} \rp^2
 = -\lp 1-2\delta\rp^2 -
 \lp  \frac{2}{\sqrt{2\pi}} e^{-\frac{\hat{\nu}^2}{2}} +\sqrt{\alpha} \rp^2 \nonumber \\
& = & -\lp 1-2\delta\rp^2 -
 \lp  \frac{2}{\sqrt{2\pi}} e^{-\lp \erfinv\lp 1-2\delta \rp\rp^2} +\sqrt{\alpha} \rp^2
 \label{eq:negprac19a6}
\end{eqnarray}
Focusing on AGS basin related part of (\ref{eq:ex0a16}), one observes that the stationary points of $\xi_1(\delta)$ are of critical importance for capacity evaluation. To that end, we continue further  by writing
\begin{eqnarray}\label{eq:negprac19a7}
 \frac{d \hat{\nu}}{d\delta}=\sqrt{2\pi}e^{(\erfinv\lp1-2\delta\rp)^2},
\end{eqnarray}
  and
\begin{eqnarray}\label{eq:negprac19a8}
\frac{df_1(\hat{\nu})
}{d\delta}=  -\hat{\nu}\frac{2}{\sqrt{2\pi}} e^{-\frac{\hat{\nu}^2}{2}} \frac{d \hat{\nu}}{d\delta}
=
-\hat{\nu}\frac{2}{\sqrt{2\pi}} e^{-\frac{\hat{\nu}^2}{2}}
\sqrt{2\pi}e^{(\erfinv\lp1-2\delta\rp)^2} =-2\hat{\nu}=-2\sqrt{2} \erfinv \lp 1-2\delta\rp.
\end{eqnarray}
Combining (\ref{eq:negprac19a6}) and (\ref{eq:negprac19a8}), we find
\begin{eqnarray}
\frac{d\xi_1(\delta)}{d\delta}
& = & 4\lp 1-2\delta\rp - 2
 \lp f_1(\hat{\nu}) +\sqrt{\alpha} \rp  \frac{df_1(\hat{\nu})}{d\delta} \nonumber \\
& = & 4\lp 1-2\delta\rp - 4 \sqrt{2} \erfinv \lp 1-2\delta\rp \lp  \frac{2}{\sqrt{2\pi}} e^{-\lp \erfinv\lp 1-2\delta \rp\rp^2} +\sqrt{\alpha} \rp.
 \label{eq:negprac19a9}
\end{eqnarray}
From the capacity point of view, the most critical scenario happens when the stationary points of $\xi_1(\delta)$ merge to become an infliction point. To analytically determine the infliction point, the second derivative is needed as well. To that end, we first have
\begin{eqnarray}
 \frac{d^2f_1(\hat{\nu})}{d\delta^2}=-2 \frac{d \hat{\nu}}{d\delta}=-2\sqrt{2\pi}e^{(\erfinv\lp1-2\delta\rp)^2},
 \label{eq:negprac19a10}
\end{eqnarray}
and then
\begin{eqnarray}
\frac{d^2\xi_1(\delta)}{d\delta^2}
& = & -8 -2 \lp\frac{df_1(\hat{\nu})}{d\delta}\rp^2 - 2
 \lp f_1(\hat{\nu}) +\sqrt{\alpha} \rp  \frac{d^2f_1(\hat{\nu})}{d\delta^2} \nonumber \\
& = & -8 -8 \hat{\nu}^2 + 4\sqrt{2\pi}
 \lp f_1(\hat{\nu}) +\sqrt{\alpha} \rp  e^{(\erfinv\lp1-2\delta\rp)^2} \nonumber \\
& = & -8 -8 \hat{\nu}^2 + 4\sqrt{2\pi}
  f_1(\hat{\nu})  e^{(\erfinv\lp1-2\delta\rp)^2} + 4\sqrt{2\pi}\sqrt{\alpha}  e^{(\erfinv\lp1-2\delta\rp)^2} \nonumber \\
 & = & -8 -8 \hat{\nu}^2 + 4\sqrt{2\pi}
   \frac{2}{\sqrt{2\pi}} e^{-(\erfinv\lp1-2\delta\rp)^2}  e^{(\erfinv\lp1-2\delta\rp)^2} + 4\sqrt{2\pi}\sqrt{\alpha}  e^{(\erfinv\lp1-2\delta\rp)^2} \nonumber \\
 & = & -8 \hat{\nu}^2   + 4\sqrt{2\pi}\sqrt{\alpha}  e^{(\erfinv\lp1-2\delta\rp)^2} \nonumber \\
 & = & -16\erfinv(1-2\delta)^2   + 4\sqrt{2\pi}\sqrt{\alpha}  e^{(\erfinv\lp1-2\delta\rp)^2}.
 \label{eq:negprac19a11}
\end{eqnarray}
To determine the infliction point one needs
\begin{eqnarray}
\frac{d\xi_1(\delta)}{d\delta}
 =  \frac{d^2\xi_1(\delta)}{d\delta^2}
 =   0.
 \label{eq:negprac19a12}
\end{eqnarray}
From (\ref{eq:negprac19a11}) and (\ref{eq:negprac19a12}) we then have
\begin{eqnarray}
 \sqrt{\alpha}=\frac{4}{\sqrt{2\pi}}\erfinv(1-2\delta)^2e^{-(\erfinv\lp1-2\delta\rp)^2}.
 \label{eq:negprac19a13}
\end{eqnarray}
Plugging $\sqrt{\alpha}$ from (\ref{eq:negprac19a13}) back into (\ref{eq:negprac19a9}) and keeping in mind (\ref{eq:negprac19a12}), we obtain the following critical equation
\begin{eqnarray}
 4\lp 1-2\delta\rp - 4 \sqrt{2} \erfinv \lp 1-2\delta\rp \lp  \frac{2}{\sqrt{2\pi}} e^{-\lp \erfinv\lp 1-2\delta \rp\rp^2} +\frac{4}{\sqrt{2\pi}}\erfinv(1-2\delta)^2e^{-(\erfinv\lp1-2\delta\rp)^2} \rp
=0.
 \label{eq:negprac19a14}
\end{eqnarray}
After denoting by $\hat{\delta}$ the solution of (\ref{eq:negprac19a14}), we then from (\ref{eq:negprac19a13}) finally have
 \begin{equation}\label{eq:negprac21}
\hspace{-0.5in}(\mbox{\bl{\textbf{first level:}}}) \qquad \qquad   \alpha_c^{(AGS,1)}=\lp\frac{4}{\sqrt{2\pi}}\erfinv(1-2\hat{\delta})^2e^{-(\erfinv\lp1-2\hat{\delta}\rp)^2}\rp^2
\approx  \bl{\mathbf{0.137905566}}.
  \end{equation}
We show in Figure \ref{fig:fig1} $\xi_{tot}(\delta)$ defined as
\begin{equation}\label{eq:negprac21a0}
  \xi_{tot}(\delta)=\xi_1(\delta)-\xi_1(0)=-(1-2\delta)^2-\xi(\delta)^2+1+\alpha.
\end{equation}
It is not that difficult to see that $ \xi_{tot}(\delta)$ is the shifted version of $\xi_1(\delta)$ which accounts for the difference with respect to the energy of the aimed memorized pattern.
\begin{figure}[h]
\centering
\centerline{\includegraphics[width=1\linewidth]{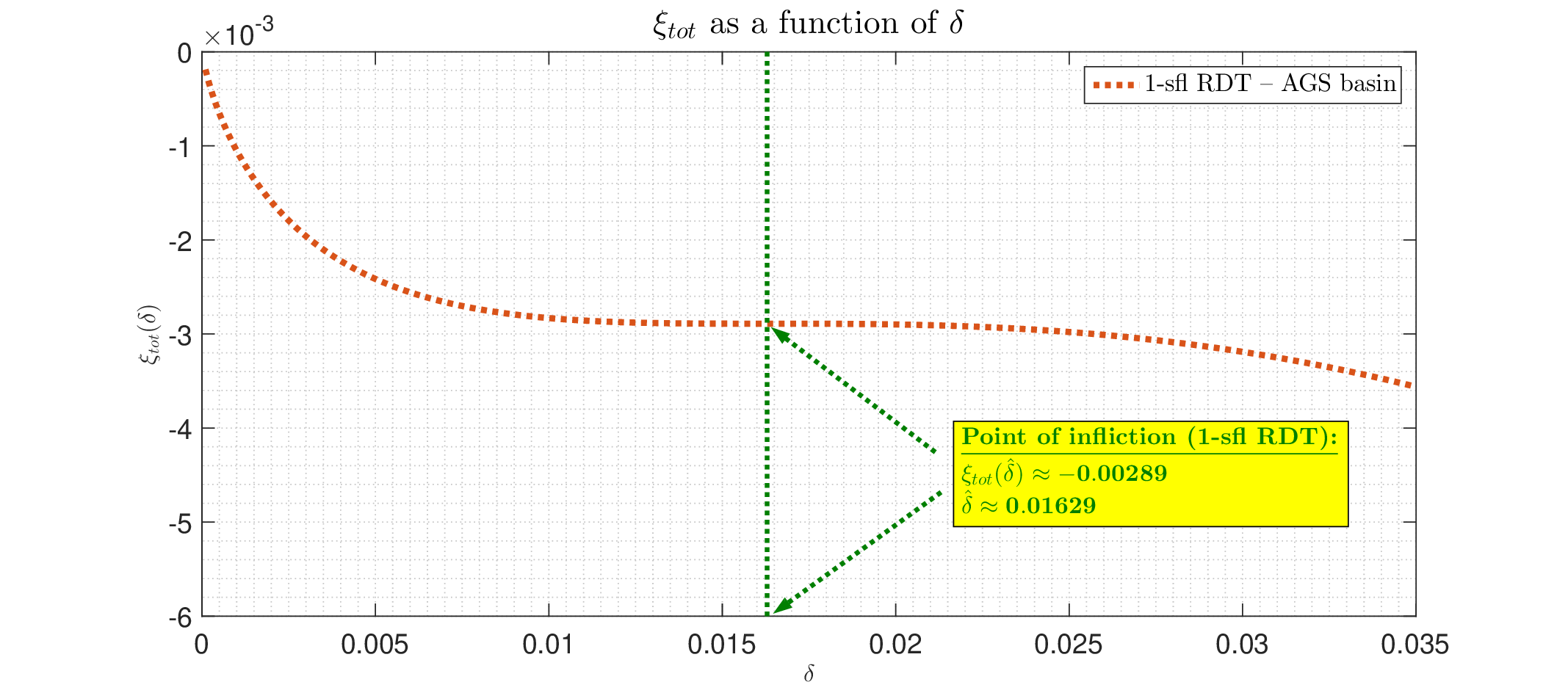}}
\caption{$\xi_{tot}$ as a function of $\delta$; $\alpha_c^{(AGS,1)}
\approx  \bl{\mathbf{0.137905566}}$ -- maximum $\alpha$ such that the infliction point still exists on the first level of lifting}
\label{fig:fig1}
\end{figure}

The above analysis proceeded rather smoothly since the underlying analytical calculations can be explicitly performed. In general, however, that might not be the case. In such situations, it is useful to observe that the associative memory capacity in the AGS basin sense, based on (\ref{eq:ex0a16}), can alternatively be obtained as
 \begin{eqnarray}
\alpha & = &    \lim_{n\rightarrow \infty} \frac{m}{n}  \nonumber \\
\alpha_c^{(AGS)} & \triangleq & \max \left \{\alpha |\hspace{.08in} \exists \delta\in \lp0,\frac{1}{2}\rp, \frac{d\xi_1(\delta)}{d\delta}=0\right \}.
  \label{eq:altcap1}
\end{eqnarray}
Combining (\ref{eq:negprac19a9}) and (\ref{eq:altcap1}), one alternative finds
\begin{eqnarray}
 \alpha_c^{(AGS,1)} & = & \lp \max_{\delta\in \lp 0,\frac{1}{2}\rp }\frac{1-2\delta}{\sqrt{2} \erfinv \lp 1-2\delta\rp} -   \frac{2}{\sqrt{2\pi}} e^{-\lp \erfinv\lp 1-2\delta \rp\rp^2}\rp^2.
 \label{eq:altcap2}
\end{eqnarray}
It is not that difficult to check that $\hat{\delta}$ that satisfies (\ref{eq:negprac19a14}) is the solution to the above optimization. The above alternative view is visualized in Figure \ref{fig:fig1a}. As figure indicates,  $\alpha>\alpha_c^{(AGS,1)}$ would not be able to touch the right hand side of (\ref{eq:altcap2}), i.e., the blue curve in the figure.
\begin{figure}[h]
\centering
\centerline{\includegraphics[width=1\linewidth]{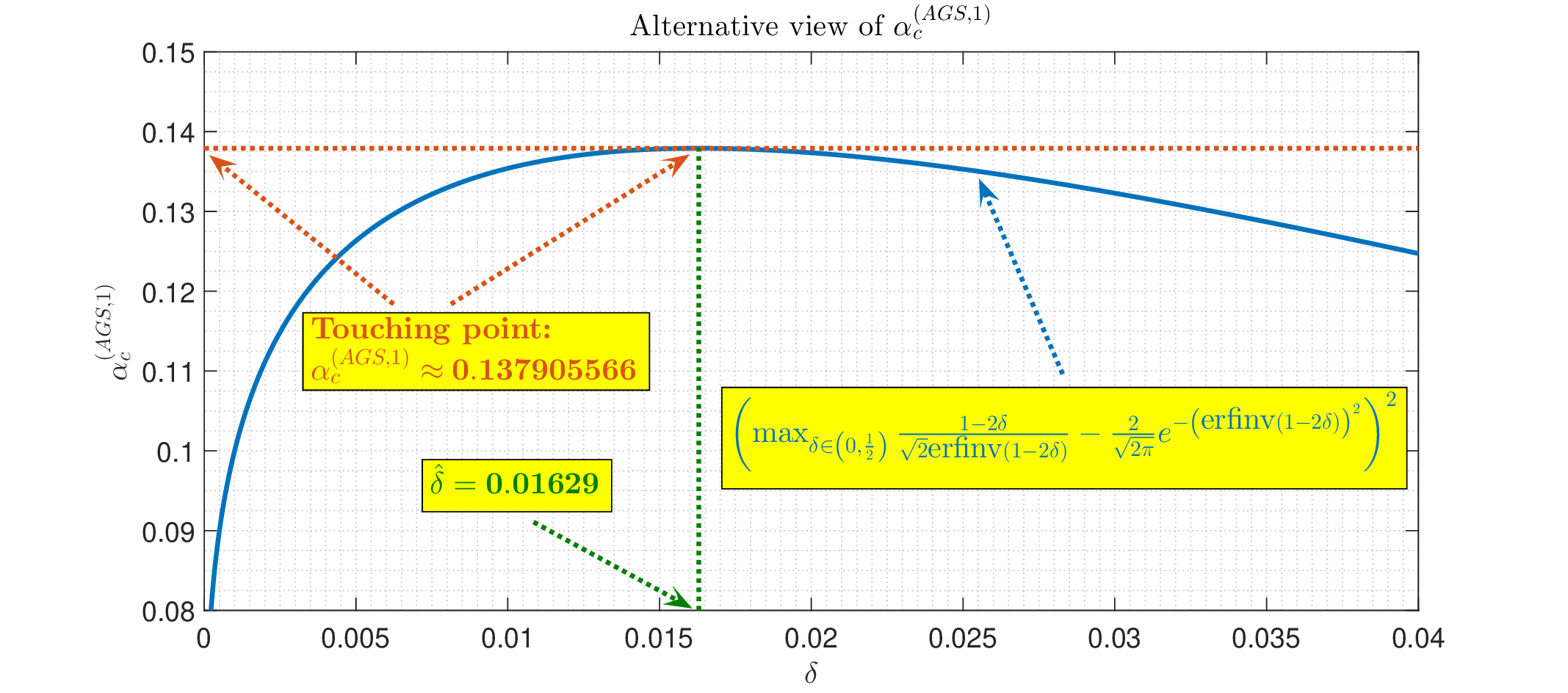}}
\caption{Alternative view of $\alpha_c^{(AGS,1)}
\approx  \bl{\mathbf{0.137905566}}$ -- maximum $\alpha$ such that the infliction point still exists on the first level of lifting}
\label{fig:fig1a}
\end{figure}

\subsubsection{$r=2$ -- second level of lifting}
\label{sec:secondlev}

The setup presented above can be utilized for the second level as well. We now, however, have $r=2$ and, similarly to what we had on the first level, $\hat{\p}_1\rightarrow 1$ and $\hat{\q}_1\rightarrow 1$. On the other hand, $\hat{\c}_{2}\neq 0$, $\p_2\neq0$, and $\q_2\neq0$ which, together with $\hat{\p}_{r+1}=\hat{\p}_{3}=\hat{\q}_{r+1}=\hat{\q}_{3}=0$ allows to write, analogously to (\ref{eq:negprac19}),
\begin{eqnarray}\label{eq:negprac24}
    \bar{\psi}_{rd}^{(2)}(\p,\q,\c,\gamma_{sq},\nu,\delta)
    & = &  -\frac{1}{2}
(1-\p_2\q_2)\c_2
-  2\nu\delta  \nonumber \\
& & + \frac{1}{\c_2}\mE_{{\mathcal U}_3}\log\lp \mE_{{\mathcal U}_2} e^{\c_2  \lp  -2\min  \lp \sqrt{1-\q_2}\h_1^{(2)}+\sqrt{\q_2}\h_1^{(3)}-\nu,0 \rp +\sqrt{1-\q_2}\h_1^{(2)}+\sqrt{\q_2}\h_1^{(3)} \rp }\rp \nonumber \\
& &   + \gamma_{sq}
 +\alpha\frac{1}{\c_2}\mE_{{\mathcal U}_3} \log\lp \mE_{{\mathcal U}_2} e^{\c_2\frac{\lp\sqrt{1-\p_2}\u_1^{(2,2)}+\sqrt{\p_2}\u_1^{(2,3)}\rp^2}{4\gamma_{sq}}}\rp \nonumber \\
& = & - \frac{1}{2}
(1-\p_2\q_2)\c_2
-  2\nu\delta  \nonumber \\
& & + \frac{1}{\c_2}\mE_{{\mathcal U}_3}\log\lp \mE_{{\mathcal U}_2} e^{\c_2  \lp  -2\min  \lp \sqrt{1-\q_2}\h_1^{(2)}+\sqrt{\q_2}\h_1^{(3)}-\nu,0 \rp +\sqrt{1-\q_2}\h_1^{(2)}+\sqrt{\q_2}\h_1^{(3)} \rp }\rp \nonumber \\
 & &  +  \gamma_{sq}
+ \alpha \Bigg(\Bigg. -\frac{1}{2\c_2} \log \lp \frac{2\gamma_{sq}-\c_2(1-\p_2)}{2\gamma_{sq}} \rp  +  \frac{\p_2}{2(2\gamma_{sq}-\c_2(1-\p_2))}   \Bigg.\Bigg).\nonumber \\
    \end{eqnarray}


After solving the remaining integrals, we also have
\begin{eqnarray}\label{eq:negprac24a2}
\hat{b} & = &  \frac{\sqrt{\q_2}\h_1^{(3)}-\nu}{\sqrt{1-\q_2}}    \nonumber \\
\hat{a} & = & \c_2\sqrt{1-\q_2}
\nonumber \\
f_{(zu)}^{(2,f)}& = & e^{\c_2\sqrt{q}\h_1^{(3)}}\lp\frac{1}{2} e^{\frac{\hat{a}^2}{2}}\lp\erf\lp\frac{\hat{a} + \hat{b}}{\sqrt{2}}\rp + 1\rp\rp
\nonumber \\
f_{(zd)}^{(2,f)}& = & e^{-\c_2\sqrt{q}\h_1^{(3)}+2\c_2\nu}\lp\frac{1}{2}e^{\frac{\hat{a}^2}{2}}\lp\erf\lp\frac{\hat{a} - \hat{b}}{\sqrt{2}}\rp + 1\rp\rp
\nonumber \\
f_{(zt)}^{(2,f)}& = & f_{(zd)}^{(2,f)}+f_{(zu)}^{(2,f)}.
   \end{eqnarray}
and
\begin{eqnarray}\label{eq:negprac24a3}
\mE_{{\mathcal U}_3}\log\lp \mE_{{\mathcal U}_2} e^{\c_2  \lp  -2\min  \lp \sqrt{1-\q_2}\h_1^{(2)}+\sqrt{\q_2}\h_1^{(3)}-\nu,0 \rp +\sqrt{1-\q_2}\h_1^{(2)}+\sqrt{\q_2}\h_1^{(3)} \rp }\rp
=   \mE_{{\mathcal U}_3} \log\lp f_{(zt)}^{(2,f)} \rp.
    \end{eqnarray}
The \emph{five} derivatives with respect to $\p_2$, $\q_2$, $\c_2$, $\gamma_{sq}$,, and $\nu$ need to be computed as well. This is done next.

\vspace{.1in}
\noindent \underline{\red{\textbf{(i) $\p_2$ -- derivative:}}} We start with the following
\begin{eqnarray}\label{eq:2levder1}
   \frac{d\bar{\psi}_{rd}^{(2)}(\p,\q,\c,\gamma_{sq},\nu,\delta) }{d\p_2}
 & = &  \frac{1}{2}
\q_2\c_2
+\alpha\Bigg(\Bigg.  -\frac{1}{2((2\gamma_{sq}-\c_2(1-\p_2)))}+\frac{1}{2((2\gamma_{sq}-\c_2(1-\p_2)))}\nonumber \\
& & -\frac{\p_2}{2(2\gamma_{sq}-\c_2(1-\p_2))^2}\c_2
  \Bigg.\Bigg)
 \nonumber \\
 & = &  \frac{1}{2}
\q_2\c_2
-\alpha \frac{\p_2}{2(2\gamma_{sq}^{(p)}-\c_2(1-\p_2))^2}\c_2 \nonumber \\
 & = &  \c_2\lp \frac{1}{2}
\q_2
-\alpha\frac{\p_2}{2(2\gamma_{sq}-\c_2(1-\p_2))^2}\rp.
     \end{eqnarray}

\vspace{.1in}
\noindent \underline{\red{\textbf{(ii) $\q_2$ -- derivative:}}} As above, we start with
\begin{equation}\label{eq:2levder2}
   \frac{d\bar{\psi}_{rd}^{(2)}(\p,\q,\c,\gamma_{sq},\nu,\delta) }{d\q_2}
  =   \frac{1}{2}
\p_2\c_2
+ \frac{1}{\c_2}\frac{d\lp \mE_{{\mathcal U}_3} \log\lp f_{(zt)}^{(2,f)} \rp \rp}{d\q_2} =  \frac{1}{2}
\p_2\c_2
+ \frac{1}{\c_2}  \mE_{{\mathcal U}_3} \lp \frac{1}{f_{(zt)}^{(2,f)}} \frac{d\lp f_{(zt)}^{(2,f)}\rp}{d\q_2} \rp.
     \end{equation}
From (\ref{eq:negprac24a2}), we further have
\begin{eqnarray}\label{eq:2levder3}
 \frac{df_{(zt)}^{(2,f)}}{d\q_2}& = &  \frac{df_{(zd)}^{(2,f)}}{d\q_2}+ \frac{df_{(zu)}^{(2,f)}}{d\q_2}.
   \end{eqnarray}
We first find
\begin{eqnarray}\label{eq:2levder4amin1}
\frac{d\hat{b}}{d\q_2} & = & \frac{\h_1^{(3)}}{2\sqrt{\q_2}\sqrt{1-\q_2}}-\frac{\nu}{2\sqrt{1-\q_2}^3} \nonumber \\
\frac{d\hat{a}}{d\q_2} & = & -\frac{\c_2}{2\sqrt{1-\q_2}}.
\end{eqnarray}
Then we obtain
\begin{eqnarray}\label{eq:2levder4}
 \frac{df_{(zu)}^{(2,f)}}{d\q_2} = f_{(\q,u)}^{(1)}+f_{(\q,u)}^{(2)},
   \end{eqnarray}
   where
\begin{eqnarray}\label{eq:2levder4a0}
 f_{(\q,u)}^{(1)} & = &  \lp\frac{\c_2}{2\sqrt{\q_2}\h_1^{(3)}}+\hat{a} \frac{d\hat{a}}{d\q_2} \rp \lp \frac{1}{2} e^{\c_2\sqrt{\q_2}\h_1^{(3)}+\hat{a}^2/2} \lp\erf \lp \frac{(\hat{a} + \hat{b})}{\sqrt{2}}\rp + 1\rp\rp \nonumber \\
 f_{(\q,u)}^{(2)} & = &    \frac{1}{2}e^{\c_2\sqrt{q}\h_1^{(3)}+\frac{\hat{a}^2}{2}}\lp \frac{2}{\sqrt{\pi}} e^{-\frac{(\hat{a} + \hat{b})^2}{2}}\lp\frac{d\hat{a}}{d\q_2}+\frac{d\hat{b}}{d\q_2}\rp\frac{1}{\sqrt{2}}\rp.
    \end{eqnarray}
One then analogously finds
 \begin{eqnarray}\label{eq:2levder4a1}
 \frac{df_{(zd)}^{(2,f)}}{d\q_2} = f_{(\q,d)}^{(1)}+f_{(\q,d)}^{(2)},
   \end{eqnarray}
   where
\begin{eqnarray}\label{eq:2levder4a2}
 f_{(\q,d)}^{(1)} & = &  \lp -\frac{\c_2}{2\sqrt{\q_2}\h_1^{(3)}}+\hat{a} \frac{d\hat{a}}{d\q_2} \rp \lp \frac{1}{2} e^{-\c_2\sqrt{\q_2}\h_1^{(3)}+2\c_2\nu+\hat{a}^2/2} \lp\erf \lp \frac{(\hat{a} - \hat{b})}{\sqrt{2}}\rp + 1\rp\rp \nonumber \\
 f_{(\q,d)}^{(2)} & = &    \frac{1}{2}e^{-\c_2\sqrt{q}\h_1^{(3)}+2\c_2\nu+\frac{\hat{a}^2}{2}}\lp \frac{2}{\sqrt{\pi}} e^{-\frac{(\hat{a} - \hat{b})^2}{2}}\lp\frac{d\hat{a}}{d\q_2}-\frac{d\hat{b}}{d\q_2}\rp\frac{1}{\sqrt{2}}\rp.
    \end{eqnarray}
 A combination of (\ref{eq:negprac24a2}), (\ref{eq:2levder2})-(\ref{eq:2levder4a2}) is then sufficient to determine $\q_2$--derivative.

\vspace{.1in}

\noindent \underline{\red{\textbf{(iii) $\c_2$ -- derivative:}}} We again start by writing
\begin{eqnarray}\label{eq:2levder12}
   \frac{d\bar{\psi}_{rd}^{(2)}(\p,\q,\c,\gamma_{sq},\nu,\delta) }{d\c_2}
 & = &  -\frac{1}{2}(1-
\p_2\q_2)
- \frac{1}{\c_2^2}  \mE_{{\mathcal U}_3} \log\lp f_{(zt)}^{(2,f)} \rp
+ \frac{1}{\c_2}  \mE_{{\mathcal U}_3} \lp \frac{1}{f_{(zt)}^{(2,f)}} \frac{d\lp f_{(zt)}^{(2,f)}\rp}{d\c_2} \rp \nonumber \\
& &
 +\alpha \Bigg (\Bigg. \frac{1}{2\c_2^2} \log \lp \frac{2\gamma_{sq}-\c_2(1-\p_2)}{2\gamma_{sq}} \rp
 +\frac{1-\p_2}{2\c_2(2\gamma_{sq}-\c_2(1-\p_2))}
 \nonumber \\
 & &
    +  \frac{\p_2(1-\p_2)}{2(2\gamma_{sq}-\c_2(1-\p_2))^2} \Bigg.\Bigg ).
      \end{eqnarray}
From (\ref{eq:negprac24a2}), we further have
\begin{eqnarray}\label{eq:2levder13}
 \frac{df_{(zt)}^{(2,f)}}{d\c_2}& = &  \frac{df_{(zd)}^{(2,f)}}{d\c_2}+ \frac{df_{(zu)}^{(2,f)}}{d\c_2}.
   \end{eqnarray}
Following what we presented above, we first find
\begin{eqnarray}\label{eq:2levder14amin1}
\frac{d\hat{b}}{d\c_2} & = & 0 \nonumber \\
\frac{d\hat{a}}{d\c_2} & = & \sqrt{1-\q_2}.
\end{eqnarray}
Then we obtain
\begin{eqnarray}\label{eq:2levder14}
 \frac{df_{(zu)}^{(2,f)}}{d\c_2} = f_{(\c,u)}^{(1)}+f_{(\c,u)}^{(2)},
   \end{eqnarray}
   where
\begin{eqnarray}\label{eq:2levder14a0}
 f_{(\c,u)}^{(1)} & = &  \lp \sqrt{\q_2}\h_1^{(3)}+\hat{a} \frac{d\hat{a}}{d\c_2} \rp \lp \frac{1}{2} e^{\c_2\sqrt{\q_2}\h_1^{(3)}+\hat{a}^2/2} \lp\erf \lp \frac{(\hat{a} + \hat{b})}{\sqrt{2}}\rp + 1\rp\rp \nonumber \\
 f_{(\c,u)}^{(2)} & = &    \frac{1}{2}e^{\c_2\sqrt{q}\h_1^{(3)}+\frac{\hat{a}^2}{2}}\lp \frac{2}{\sqrt{\pi}} e^{-\frac{(\hat{a} + \hat{b})^2}{2}}\lp\frac{d\hat{a}}{d\c_2}+\frac{d\hat{b}}{d\c_2}\rp\frac{1}{\sqrt{2}}\rp.
    \end{eqnarray}
One then analogously finds
 \begin{eqnarray}\label{eq:2levder14a1}
 \frac{df_{(zd)}^{(2,f)}}{d\q_2} = f_{(\c,d)}^{(1)}+f_{(\c,d)}^{(2)},
   \end{eqnarray}
   where
\begin{eqnarray}\label{eq:2levder14a2}
 f_{(\c,d)}^{(1)} & = &  \lp -\sqrt{\q_2}\h_1^{(3)}+2\nu+\hat{a} \frac{d\hat{a}}{d\c_2} \rp \lp \frac{1}{2} e^{-\c_2\sqrt{\q_2}\h_1^{(3)}+2\c_2\nu+\hat{a}^2/2} \lp\erf \lp \frac{(\hat{a} - \hat{b})}{\sqrt{2}}\rp + 1\rp\rp \nonumber \\
 f_{(\c,d)}^{(2)} & = &    \frac{1}{2}e^{-\c_2\sqrt{q}\h_1^{(3)}+2\c_2\nu+\frac{\hat{a}^2}{2}}\lp \frac{2}{\sqrt{\pi}} e^{-\frac{(\hat{a} - \hat{b})^2}{2}}\lp\frac{d\hat{a}}{d\c_2}-\frac{d\hat{b}}{d\c_2}\rp\frac{1}{\sqrt{2}}\rp.
    \end{eqnarray}
A combination of (\ref{eq:negprac24a2}), (\ref{eq:2levder12})-(\ref{eq:2levder14a2}) is then sufficient to determine $\c_2$--derivative.

\vspace{.1in}

\noindent \underline{\red{\textbf{(iv) $\gamma_{sq}$ -- derivative:}}} We easily find
\begin{eqnarray}\label{eq:2levder21a0}
   \frac{d\bar{\psi}_{rd}^{(2)}(\p,\q,\c,\gamma_{sq},\nu,\delta) }{d\gamma_{sq}}
  &  =  &   1 + \alpha \lp -\frac{1}{\c_2(2\gamma_{sq}-\c_2(1-\p_2))}+\frac{1}{2\c_2\gamma_{sq}}-\frac{\p_2}{(2\gamma_{sq}-\c_2(1-\p_2))^2}\rp \nonumber \\
   & = &    1 + \alpha \lp -\frac{1-\p_2}{2\gamma_{sq}(2\gamma_{sq}-\c_2(1-\p_2))}-\frac{\p_2}{(2\gamma_{sq}-\c_2(1-\p_2))^2}\rp. \nonumber \\
\end{eqnarray}

\vspace{.1in}

\noindent \underline{\red{\textbf{(v) $\nu$ -- derivative:}}} Following the path traced above, we start by writing
\begin{equation}\label{eq:2levder22}
   \frac{d\bar{\psi}_{rd}^{(2)}(\p,\q,\c,\gamma_{sq},\nu,\delta) }{d\nu}
=
 -2\delta
+ \frac{1}{\c_2}  \mE_{{\mathcal U}_3} \lp \frac{1}{f_{(zt)}^{(2,f)}} \frac{d\lp f_{(zt)}^{(2,f)}\rp}{d\nu} \rp.
 \end{equation}
Recalling once again on (\ref{eq:negprac24a2}), we have
\begin{eqnarray}\label{eq:2levder23}
 \frac{df_{(zt)}^{(2,f)}}{d\nu}& = &  \frac{df_{(zd)}^{(2,f)}}{d\nu}+ \frac{df_{(zu)}^{(2,f)}}{d\nu}.
   \end{eqnarray}
After finding
\begin{eqnarray}\label{eq:2levder24amin1}
\frac{d\hat{b}}{d\c_2} & = & -\frac{1}{\sqrt{1-\q_2}} \nonumber \\
\frac{d\hat{a}}{d\c_2} & = & 0,
\end{eqnarray}
we then also observe that
\begin{eqnarray}\label{eq:2levder24}
 \frac{df_{(zu)}^{(2,f)}}{d\nu} = f_{(\nu,u)}^{(1)}+f_{(\nu,u)}^{(2)},
   \end{eqnarray}
   where
\begin{eqnarray}\label{eq:2levder24a0}
 f_{(\nu,u)}^{(1)} & = &  \lp \hat{a} \frac{d\hat{a}}{d\c_2} \rp \lp \frac{1}{2} e^{\c_2\sqrt{\q_2}\h_1^{(3)}+\hat{a}^2/2} \lp\erf \lp \frac{(\hat{a} + \hat{b})}{\sqrt{2}}\rp + 1\rp\rp \nonumber \\
 f_{(\nu,u)}^{(2)} & = &    \frac{1}{2}e^{\c_2\sqrt{q}\h_1^{(3)}+\frac{\hat{a}^2}{2}}\lp \frac{2}{\sqrt{\pi}} e^{-\frac{(\hat{a} + \hat{b})^2}{2}}\lp\frac{d\hat{a}}{d\nu}+\frac{d\hat{b}}{d\nu}\rp\frac{1}{\sqrt{2}}\rp.
    \end{eqnarray}
In a similar manner, one then also finds
 \begin{eqnarray}\label{eq:2levder24a1}
 \frac{df_{(zd)}^{(2,f)}}{d\nu} = f_{(\nu,d)}^{(1)}+f_{(\nu,d)}^{(2)},
   \end{eqnarray}
   where
\begin{eqnarray}\label{eq:2levder24a2}
 f_{(\nu,d)}^{(1)} & = &  \lp 2\c_2+\hat{a} \frac{d\hat{a}}{d\c_2} \rp \lp \frac{1}{2} e^{-\c_2\sqrt{\q_2}\h_1^{(3)}+2\c_2\nu+\hat{a}^2/2} \lp\erf \lp \frac{(\hat{a} - \hat{b})}{\sqrt{2}}\rp + 1\rp\rp \nonumber \\
 f_{(\nu,d)}^{(2)} & = &    \frac{1}{2}e^{-\c_2\sqrt{q}\h_1^{(3)}+2\c_2\nu+\frac{\hat{a}^2}{2}}\lp \frac{2}{\sqrt{\pi}} e^{-\frac{(\hat{a} - \hat{b})^2}{2}}\lp\frac{d\hat{a}}{d\nu}-\frac{d\hat{b}}{d\nu}\rp\frac{1}{\sqrt{2}}\rp.
    \end{eqnarray}
  Combining (\ref{eq:negprac24a2}), (\ref{eq:2levder22})-(\ref{eq:2levder24a2}) is sufficient to determine $\nu$--derivative. After solving the following system
\begin{eqnarray}\label{eq:2levder32}
  \frac{d\bar{\psi}_{rd}^{(2)}(\p,\q,\c,\gamma_{sq},\nu,\delta) }{d\q_2}
 & = &  0\nonumber \\
 \frac{d\bar{\psi}_{rd}^{(2)}(\p,\q,\c,\gamma_{sq},\nu,\delta) }{d\p_2}
 & = &  0 \nonumber \\
 \frac{d\bar{\psi}_{rd}^{(2)}(\p,\q,\c,\gamma_{sq},\nu,\delta) }{d\c_2}
 & = &  0\nonumber \\
 \frac{d\bar{\psi}_{rd}^{(2)}(\p,\q,\c,\gamma_{sq},\nu,\delta) }{d\gamma_{sq}}
 & = &  0\nonumber \\
 \frac{d\bar{\psi}_{rd}^{(2)}(\p,\q,\c,\gamma_{sq},\nu,\delta) }{d\nu}
 & = &  0,
      \end{eqnarray}
and denoting by $\hat{\p}_2,\hat{\q}_2,\hat{\c}_2,\hat{\gamma}_{sq},\hat{\nu}$  the obtained solution, one obtains
$\bar{\psi}_{rd}^{(2)}(\hat{\p},\hat{\q},\hat{\c},\hat{\gamma}_{sq},\hat{\nu},\delta)$ and consequently $\xi(\delta)$ and $\xi_1(\delta)$. To determine the point of infliction one also needs $\delta$-derivative.

\vspace{.1in}

\noindent \underline{\red{\textbf{(vi) $\delta$ -- derivative:}}} We first quickly note
\begin{equation}\label{eq:2levder42}
   \frac{d\bar{\psi}_{rd}^{(2)}(\p,\q,\c,\gamma_{sq},\nu,\delta) }{d\delta} = -2\nu.
 \end{equation}
Moreover, recalling on (\ref{eq:ex0a12}) and (\ref{eq:negthmprac1eq2}), we also have
\begin{equation}\label{eq:2levder43}
\frac{d\xi_1(\delta)}{d\delta}=4(1-2\delta)-2\xi(\delta)\frac{d\xi(\delta)}{d\delta}
= 4(1-2\delta)-2\bar{\psi}_{rd}^{(2)}(\hat{\p},\hat{\q},\hat{\c},\hat{\gamma}_{sq},\hat{\nu},\delta)
\frac{d\bar{\psi}_{rd}^{(2)}(\hat{\p},\hat{\q},\hat{\c},\hat{\gamma}_{sq},\hat{\nu},\delta)}{d\delta}.
 \end{equation}
One also recognizes that
\begin{eqnarray}\label{eq:2levder44}
 \frac{d\bar{\psi}_{rd}^{(2)}(\hat{\p},\hat{\q},\hat{\c},\hat{\gamma}_{sq},\hat{\nu},\delta)}{d\delta}
 & = & \Bigg ( \Bigg.
  \frac{d\bar{\psi}_{rd}^{(2)}(\p,\q,\c,\gamma_{sq},\nu,\delta)}{d\p_2}\frac{d\p_2}{d\delta}
+  \frac{d\bar{\psi}_{rd}^{(2)}(\p,\q,\c,\gamma_{sq},\nu,\delta)}{d\q_2}\frac{d\q_2}{d\delta} \nonumber \\
& & +  \frac{d\bar{\psi}_{rd}^{(2)}(\p,\q,\c,\gamma_{sq},\nu,\delta)}{d\c_2}\frac{d\c_2}{d\delta}
+  \frac{d\bar{\psi}_{rd}^{(2)}(\p,\q,\c,\gamma_{sq},\nu,\delta)}{d\gamma_{sq}}\frac{d\gamma_{sq}}{d\delta} \nonumber \\
& &  +  \frac{d\bar{\psi}_{rd}^{(2)}(\p,\q,\c,\gamma_{sq},\nu,\delta)}{d\nu}\frac{d\nu}{d\delta}
+  \frac{d\bar{\psi}_{rd}^{(2)}(\p,\q,\c,\gamma_{sq},\nu,\delta)}{d\delta}
 \Bigg. \left.\Bigg )\right|_{(\p,\q,\c,\gamma_{sq},\nu)=(\hat{\p},\hat{\q},\hat{\c},\hat{\gamma}_{sq},\hat{\nu})} \nonumber \\
 & = & \left.\frac{d\bar{\psi}_{rd}^{(2)}(\p,\q,\c,\gamma_{sq},\nu,\delta)}{d\delta}
  \right|_{(\p,\q,\c,\gamma_{sq},\nu)=(\hat{\p},\hat{\q},\hat{\c},\hat{\gamma}_{sq},\hat{\nu})} \nonumber \\
 & = & -2\hat{\nu}.
 \end{eqnarray}
Recalling on (\ref{eq:ex0a16}) and (\ref{eq:altcap1}), one has that the associative memory capacity in the AGS basin sense is given as
 \begin{eqnarray}
 \alpha_c^{(AGS)} & \triangleq & \max \left \{\alpha |\hspace{.08in} \exists \delta\in \lp0,\frac{1}{2}\rp, \frac{d\xi_1(\delta)}{d\delta}=0\right \}.
  \label{eq:2levaltcap1}
\end{eqnarray}
Combining (\ref{eq:2levder43}) and (\ref{eq:2levaltcap1}), we then have
 \begin{eqnarray}
 \alpha_c^{(AGS,2)} & \triangleq & \max \left \{\alpha |\hspace{.08in} \exists \delta\in \lp0,\frac{1}{2}\rp,
 4(1-2\delta)+4\hat{\nu}\bar{\psi}_{rd}(\hat{\p},\hat{\q},\hat{\c},\hat{\gamma}_{sq},\hat{\nu},\delta)
 =0\right \}.
  \label{eq:2levaltcap2}
\end{eqnarray}
Taking the concrete numerical values gives
\vspace{.05in}
\begin{equation}\label{eq:2levder34a0}
\hspace{-2.5in}(\mbox{\bl{\textbf{second level:}}}) \qquad \qquad \qquad \qquad \qquad  a_c^{(AGS,2)}
\approx  \bl{\mathbf{0.138186}}.
  \end{equation}

\vspace{.1in}

\noindent \underline{\textbf{\emph{Closed form relations:}}} Following \cite{Stojnicnegsphflrdt23,Stojnictcmspnncapdinfdiffactrdt23}, we uncover the existence of closed form explicit parameters relations that turn out to be of crucial help for numerical handling of the above system. Namely, from (\ref{eq:2levder1}), we first find
\begin{eqnarray}\label{eq:helprel1}
 \q_2=\alpha\frac{\p_2}{(2\gamma_{sq}-\c_2(1-\p_2))^2}.
     \end{eqnarray}
From (\ref{eq:2levder21a0}), we further have
\begin{eqnarray}\label{eq:helprel2}
     1=\alpha\frac{1-\p_2}{2\gamma_{sq}(2\gamma_{sq}-\c_2(1-\p_2))}+\alpha\frac{\p_2}{(2\gamma_{sq}-\c_2(1-\p_2))^2}.
\end{eqnarray}
A combination of (\ref{eq:helprel1}) and (\ref{eq:helprel2}) gives
\begin{eqnarray}\label{eq:helprel3}
     1=\alpha\frac{1-\p_2}{2\gamma_{sq}(2\gamma_{sq}-\c_2(1-\p_2))}+\alpha\frac{\p_2}{(2\gamma_{sq}-\c_2(1-\p_2))^2}
     =\alpha\frac{1-\p_2}{2\gamma_{sq}(2\gamma_{sq}-\c_2(1-\p_2))}+\q_2.
\end{eqnarray}
Combining (\ref{eq:helprel1}) and (\ref{eq:helprel3}) further, we obtain
\begin{eqnarray}\label{eq:helprel4}
      \gamma_{sq} =\alpha\frac{1-\p_2}{2(1-\q_2)(2\gamma_{sq}-\c_2(1-\p_2))}
      =\frac{1}{2}\frac{1-\p_2}{1-\q_2}\sqrt{\frac{\q_2}{\p_2}\alpha}.
\end{eqnarray}
Also, from (\ref{eq:helprel1}) we can write
\begin{eqnarray}\label{eq:helprel5}
\c_2(1-\p_2) =2\gamma_{sq}-\sqrt{\frac{\p_2}{\q_2}\alpha}.
     \end{eqnarray}
A combination of (\ref{eq:helprel4}) and (\ref{eq:helprel5}) then gives
\begin{eqnarray}\label{eq:helprel6}
\c_2(1-\p_2) =2\gamma_{sq}-\sqrt{\frac{\p_2}{\q_2}\alpha}=\frac{1-\p_2}{1-\q_2}\sqrt{\frac{\q_2}{\p_2}\alpha}-\sqrt{\frac{\p_2}{\q_2}\alpha},
     \end{eqnarray}
and
\begin{eqnarray}\label{eq:helprel7}
\c_2 =\frac{1}{1-\q_2}\sqrt{\frac{\q_2}{\p_2}\alpha}-\frac{1}{1-\p_2}\sqrt{\frac{\p_2}{\q_2}\alpha}.
     \end{eqnarray}

\noindent \underline{\textbf{\emph{Concrete numerical values:}}}  In  Table \ref{tab:tab1}, $a_c^{(AGS,2)}$ is complemented with the concrete values of all the relevant quantities related to the second \emph{full} (2-sfl RDT) level of lifting. We also show in parallel the same quantities for the first \emph{full} (1-sfl RDT) which enables a systematic view of the lifting progress.
\begin{table}[h]
\caption{$r$-sfl RDT parameters; Hopfield associative memory capacity;  $\hat{\c}_1\rightarrow 1$; $n,\beta\rightarrow\infty$}\vspace{.1in}
\centering
\def\arraystretch{1.2}
\begin{tabular}{||l||c|c|c||c|c||c|c||c||c||}\hline\hline
 \hspace{-0in}$r$-sfl RDT                                             & $\hat{\gamma}_{sq}$    & $\hat{\nu}$   & $\hat{\delta}$ &  $\hat{\p}_2$ & $\hat{\p}_1`$     & $\hat{\q}_2$  & $\hat{\q}_1$ &  $\hat{\c}_2$    & $\alpha_c^{(AGS,r)}$  \\ \hline\hline
$1$-sfl RDT                                      & $0.1867$ & $-2.1372$ & $0.0163$ & $0$  & $\rightarrow 1$   & $0$ & $\rightarrow 1$
 &  $\rightarrow 0$  & \bl{$\mathbf{0.137906}$} \\ \hline\hline
   $2$-sfl RDT                                      & $0.2153$  & $-2.1252$ &  $0.0167$ & $0.99645$  & $\rightarrow 1$ &  $0.99694$ & $\rightarrow 1$
 &  $16.6192$   & \bl{$\mathbf{0.138186}$}  \\ \hline\hline
  \end{tabular}
\label{tab:tab1}
\end{table}

As was done on the first level, we visualize the above results by showing in Figure \ref{fig:fig1} $\xi_{tot}(\delta)$ which is defined as
\begin{equation}\label{eq:negprac21a0}
  \xi_{tot}(\delta)=\xi_1(\delta)-\xi_1(0)=-(1-2\delta)^2-\xi(\delta)^2+1+\alpha.
\end{equation}
We recall that $ \xi_{tot}(\delta)$ is the shifted version of $\xi_1(\delta)$ which accounts for the difference with respect to the energy of the aimed memorized pattern.
\begin{figure}[h]
\centering
\centerline{\includegraphics[width=1\linewidth]{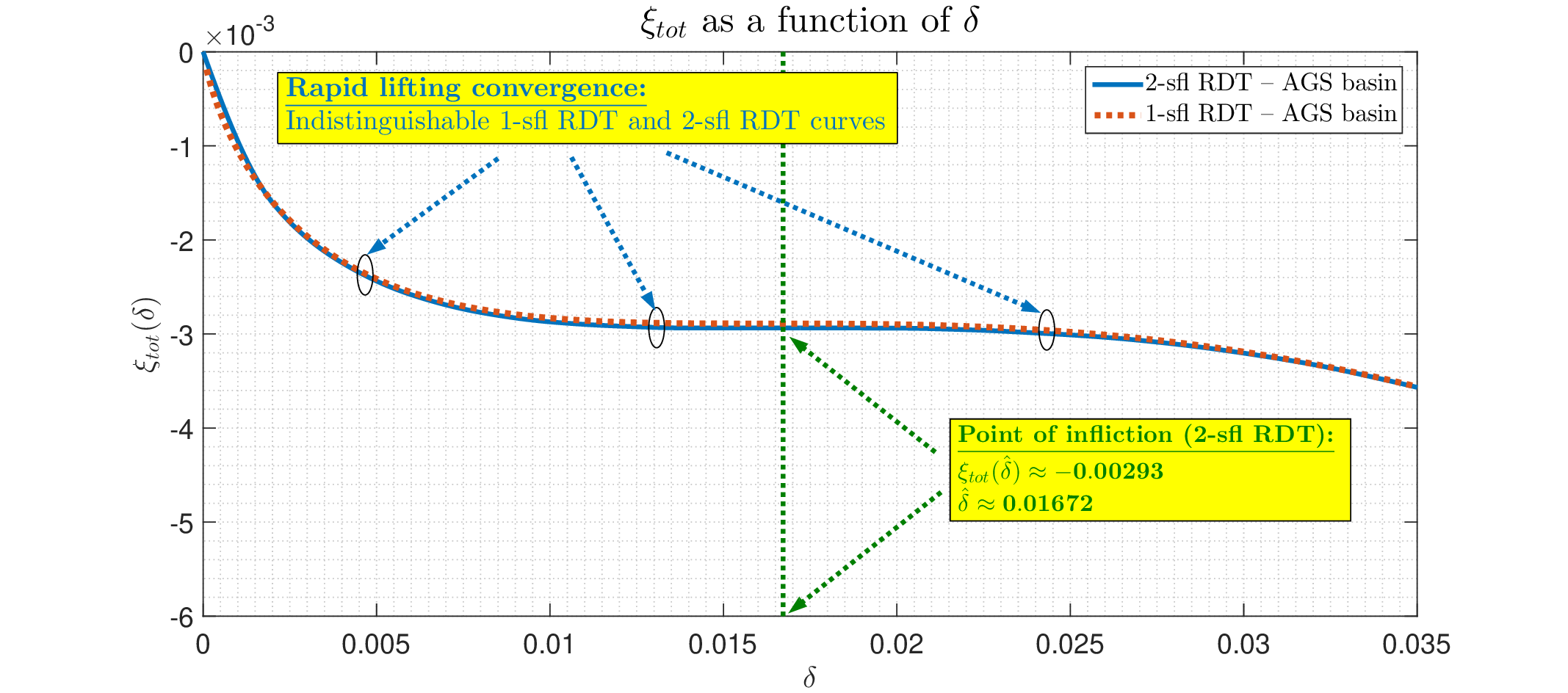}}
\caption{$\xi_{tot}$ as a function of $\delta$; $\alpha_c^{(AGS,2)}
\approx  \bl{\mathbf{0.138186}}$ -- maximum $\alpha$ such that the infliction point still exists on the second level of lifting}
\label{fig:fig2}
\end{figure}

As the figure shows, the convergence is remarkably fast. Even with a rather tiny scaling of $y$ axis (with the values $\sim 10^{-3}$), the $\xi_{tot}(\delta)\triangleq\xi_1(\delta)-\xi_1(0)$ curves for the first and second level are almost indistinguishable. Consequently, we have that already on the second level of lifting the associative capacity relative improvement is $\sim 0.1\%$, which renders evaluations for higher levels practically irrelevant. We also observe that the obtained results exactly match the ones obtained utilizing the statistical physics methods based on replica symmetry \cite{AmiGutSom85} and replica symmetry breaking \cite{SteKuh94}.

\subsubsection{Modulo-$\m$ sfl RDT}
\label{sec:posmodm}

Everything that we presented above can be repeated if one relies on the so-called modulo-$m$ sfl RDT frame of \cite{Stojnicsflgscompyx23}. Instead of Theorem \ref{thme:negthmprac1}, one then has the following theorem.
 \begin{theorem}
  \label{thme:negthmprac2}
  Assume the setup of Theorem \ref{thme:negthmprac1} and instead of the complete, assume the modulo-$\m$ sfl RDT setup of \cite{Stojnicsflgscompyx23}.
   Let the ``fixed'' parts of $\hat{\p}$, $\hat{\q}$, and $\hat{\c}$ satisfy $\hat{\p}_1\rightarrow 1$, $\hat{\q}_1\rightarrow 1$, $\hat{\c}_1\rightarrow 1$, $\hat{\p}_{r+1}=\hat{\q}_{r+1}=\hat{\c}_{r+1}=0$, and let the ``non-fixed'' parts of $\hat{\p}_k$, and $\hat{\q}_k$ ($k\in\{2,3,\dots,r\}$), $\hat{\gamma}_{sq}$, and $\hat{\nu}$ be the solutions of the following system of equations
  \begin{eqnarray}\label{eq:negthmprac2eq1}
   \frac{d \bar{\psi}_{rd}(\p,\q,\c,\gamma_{sq},\nu,\delta)}{d\p} =  0 \nonumber \\
   \frac{d \bar{\psi}_{rd}(\p,\q,\c,\gamma_{sq},\nu,\delta)}{d\q} =  0 \nonumber \\
    \frac{d \bar{\psi}_{rd}(\p,\q,\c,\gamma_{sq},\nu,\delta)}{d\gamma_{sq}} =  0\nonumber \\
    \frac{d \bar{\psi}_{rd}(\p,\q,\c,\gamma_{sq},\nu,\delta)}{d\nu} =  0.
 \end{eqnarray}
 Consequently, let
\begin{eqnarray}\label{eq:negthmprac2eq2}
c_k(\hat{\p},\hat{\q})  & = & \sqrt{\hat{\q}_{k-1}-\hat{\q}_k} \nonumber \\
b_k(\hat{\p},\hat{\q})  & = & \sqrt{\hat{\p}_{k-1}-\hat{\p}_k}.
 \end{eqnarray}
 Then
 \begin{eqnarray}
\xi(\delta) \geq   \min_{\c} \bar{\psi}_{rd}(\hat{\p},\hat{\q},\c,\hat{\gamma}_{sq},\hat{\nu},\delta).
  \label{eq:negthmprac2eq3}
\end{eqnarray}
\end{theorem}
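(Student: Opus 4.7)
The plan is to mirror the proof of Theorem \ref{thme:negthmprac1} but to replace the invocation of the complete sfl RDT (Theorem \ref{thm:thmsflrdt1}) with its modulo-$\m$ counterpart from \cite{Stojnicsflgscompyx23}. In the complete frame, the strong sfl random duality (\ref{eq:thmsflrdt2eq3}) is an equality obtained after simultaneously optimizing over $\p,\q,\c$; in the modulo-$\m$ frame, the $\c$ parameters are not solved from stationarity but are instead treated as tunable free parameters, and the corresponding identity degrades to a one-directional inequality. So I would first record the modulo-$\m$ analogue of Theorem \ref{thm:thmsflrdt1}, namely
\begin{equation*}
\lim_{n\rightarrow\infty}\frac{\mE_G\psi_{rp}}{\sqrt{n}} \ \geq \ \min_{x>0}\max_{y>0}\min_{\c} \lim_{n\rightarrow\infty} \psi_{rd}(\hat{\p}(x,y,\c),\hat{\q}(x,y,\c),\c,x,y,s),
\end{equation*}
where the critical points $\hat{\p},\hat{\q}$ are now $\c$-dependent solutions of the reduced stationarity system that omits the $\c$-derivative. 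This inequality is precisely what the modulo-$\m$ sfl RDT frame of \cite{Stojnicsflgscompyx23} delivers (applied in an analogous fashion in \cite{Stojnicnegsphflrdt23,Stojnicbinperflrdt23}), together with the concentration/Lipschitz estimates needed to pass from finite $n$ to the limit.

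Next, I would specialize to $\cX=\cX(\delta)$ and $\cY=\mS^m$, exactly as in Corollary \ref{cor:cor1}, which forces $x=y=1$ and collapses the outer $\min_x\max_y$. The remaining reduction from the generic $\psi_{S,\infty}$ to the object $\bar{\psi}_{rd}(\p,\q,\c,\gamma_{sq},\nu,\delta)$ of (\ref{eq:thm2eq2}) is carried out verbatim as in the proof of Theorem \ref{thme:negthmprac1}: decouple the $\x$-maximization via the Hamming-cube parameterization (\ref{eq:prac4a0a1}) followed by strong Lagrangian duality to extract the per-coordinate term $D_1^{(per)}(1)$ with Lagrange multiplier $\nu$ as in (\ref{eq:prac4a0a4})--(\ref{eq:prac5}), and decouple the $\y$-maximization on the sphere via the square-root trick to extract the per-coordinate term $D_1^{(sph)}(1)$ with scale $\gamma_{sq}$ as in (\ref{eq:prac7})--(\ref{eq:prac10}). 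The stationarity in the remaining \emph{primal-side} variables $\nu$ and $\gamma_{sq}$ (inherited from these Lagrangian/square-root manipulations, not from sfl RDT itself) supplies precisely the last two equations of (\ref{eq:negthmprac2eq1}), while the $\p$- and $\q$-equations come from the reduced modulo-$\m$ stationarity conditions.

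Combining these pieces with the representation $\xi(\delta)=\lim_{n\rightarrow\infty}\mE_G\psi_{rp}/\sqrt{n}$ from (\ref{eq:negprac11}) yields
\begin{equation*}
\xi(\delta) \ \geq \ \min_{\c} \bar{\psi}_{rd}(\hat{\p},\hat{\q},\c,\hat{\gamma}_{sq},\hat{\nu},\delta),
\end{equation*}
which is the claim (\ref{eq:negthmprac2eq3}). The only point that really demands attention is the direction of the inequality: once the sign parameter is fixed to $s=1$ (forced by the fact that $\xi(\delta)$ is a maximum rather than a minimum) and one tracks the leading minus sign in the definition of $\psi_{rp}$ in (\ref{eq:thmsflrdt2eq1a0}), the modulo-$\m$ inequality flips once and lands in the stated orientation. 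Beyond this bookkeeping, the derivation is a term-for-term replica of the proof of Theorem \ref{thme:negthmprac1}, which is why the proof can legitimately be concluded by invoking that theorem together with the modulo-$\m$ variant of the sfl RDT machinery.
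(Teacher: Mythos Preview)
Your proposal is correct and follows essentially the same approach as the paper: the paper's own proof is a one-line invocation of Theorems \ref{thm:thmsflrdt1} and \ref{thme:negthmprac1}, Corollary \ref{cor:cor1}, and the sfl RDT machinery of \cite{Stojnicnflgscompyx23,Stojnicsflgscompyx23,Stojnicflrdt23}, and what you have written is precisely an unpacking of that invocation --- swap the complete sfl RDT for its modulo-$\m$ variant (which downgrades the equality to an inequality with a free $\min_{\c}$), then repeat the specialization $\cX=\cX(\delta)$, $\cY=\mS^m$ and the $\nu,\gamma_{sq}$ decouplings verbatim from the derivation preceding Theorem \ref{thme:negthmprac1}. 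Your explicit remark about tracking the sign through the definition of $\psi_{rp}$ and the passage from $\psi_{rd}$ to $\bar{\psi}_{rd}$ is the only point of substance beyond pure citation, and it is handled correctly.
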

\begin{proof}
Follows from the above discussions, Theorems \ref{thm:thmsflrdt1} and \ref{thme:negthmprac1}, Corollary \ref{cor:cor1}, and the sfl RDT machinery presented in \cite{Stojnicnflgscompyx23,Stojnicsflgscompyx23,Stojnicflrdt23}.
\end{proof}
We conducted the numerical evaluations using the modulo-$\m$ results of the above theorem and found no scenario where the inequality in (\ref{eq:negthmprac2eq3}) is not tight, which indicates that the \emph{stationarity} over $\c$ is of the \emph{minimization} type. Moreover, we repeated the same type of evaluations for the NLT considerations discussed below and observed the very same behavior.

\subsection{Numerical evaluations -- NLT basin}
\label{sec:nuemricalagsNLT}

Below we show that the capacity results analogous to the AGS basin ones presented above, can be obtained for the NLT basin as well. Moreover, many of the calculations remains the same. What changes is the way how one uses them. As was the case above when analyzing the AGS basin, we again start the evaluations with $r=1$ and proceed by incrementally increasing $r$.

\subsubsection{$r=1$ -- first level of lifting}
\label{sec:firstlevNLT}

Everything between (\ref{eq:negprac19}) and (\ref{eq:negprac19a9}) remains in place again, which means that we have $\hat{\nu}=\sqrt{2} \erfinv \lp 1-2\delta\rp$,
\begin{eqnarray}
\xi_1(\delta)
& = & -\lp 1-2\delta\rp^2 -
 \lp  \frac{2}{\sqrt{2\pi}} e^{-\lp \erfinv\lp 1-2\delta \rp\rp^2} +\sqrt{\alpha} \rp^2,
 \label{eq:negprac19a6nl}
\end{eqnarray}
 and
\begin{eqnarray}
\frac{d\xi_1(\delta)}{d\delta}
 & = & 4\lp 1-2\delta\rp - 4 \sqrt{2} \erfinv \lp 1-2\delta\rp \lp  \frac{2}{\sqrt{2\pi}} e^{-\lp \erfinv\lp 1-2\delta \rp\rp^2} +\sqrt{\alpha} \rp.
 \label{eq:negprac19a9nl}
\end{eqnarray}
Recalling on (\ref{eq:ex0a16}), we observe that the associative memory capacity in the NLT basin sense, can alternatively be obtained as
 \begin{eqnarray}
 \alpha_c^{(NLT)} & \triangleq & \max \left \{\alpha |\hspace{.08in} \exists \hat{\delta}\in \lp0,\frac{1}{2}\rp, \left.\frac{d\xi_1(\delta)}{d\delta}\right|_{\delta=\hat{\delta}}=0\quad \mbox{and}\quad \xi_1(\hat{\delta})=\xi_1(0)\right \}.
  \label{eq:altcap1nl}
\end{eqnarray}
Utilizing (\ref{eq:negprac19a9nl}) and  $\frac{d\xi_1(\delta)}{d\delta}=0$, implies the following choice
\begin{eqnarray}
 \alpha & = & \lp  \frac{1-2\delta}{\sqrt{2} \erfinv \lp 1-2\delta\rp} -   \frac{2}{\sqrt{2\pi}} e^{-\lp \erfinv\lp 1-2\delta \rp\rp^2}\rp^2.
 \label{eq:altcap2nl}
\end{eqnarray}
Recalling on (\ref{eq:negprac19a6nl}) and $\xi(0)=-1-\alpha$, we then have that $\xi_1(\hat{\delta})=\xi_1(0)$ implies
\begin{eqnarray}
 -\lp 1-2\delta\rp^2 -
 \lp  \frac{2}{\sqrt{2\pi}} e^{-\lp \erfinv\lp 1-2\delta \rp\rp^2} +\sqrt{\alpha} \rp^2
 = \xi_1(\delta)
 =\xi_1(0) =-1-\alpha.
 \label{eq:altcap3nl}
\end{eqnarray}
After plugging $\alpha$ from (\ref{eq:altcap2nl}) into (\ref{eq:altcap3nl}), we obtain
\begin{eqnarray}
 -\lp 1-2\delta\rp^2 -
 \lp \frac{1-2\delta}{\sqrt{2} \erfinv \lp 1-2\delta\rp}\rp^2
 =-1-\lp  \frac{1-2\delta}{\sqrt{2} \erfinv \lp 1-2\delta\rp} -   \frac{2}{\sqrt{2\pi}} e^{-\lp \erfinv\lp 1-2\delta \rp\rp^2}\rp^2.
 \label{eq:altcap4nl}
\end{eqnarray}
A bit of additional algebraic transformations gives
\begin{eqnarray}
2\delta(1-\delta)
 - \frac{1-2\delta}{\sqrt{\pi} \erfinv \lp 1-2\delta\rp}  e^{-\lp \erfinv\lp 1-2\delta \rp\rp^2}
 +\frac{1}{\pi} e^{-2\lp \erfinv\lp 1-2\delta \rp\rp^2}=0.
 \label{eq:altcap5nl}
\end{eqnarray}
After denoting by $\hat{\delta}$ the solution of (\ref{eq:altcap5nl}), we then from (\ref{eq:altcap2nl}) finally have
 \begin{equation}\label{eq:negprac21nl}
\hspace{-0.2in}(\mbox{\bl{\textbf{first level:}}}) \qquad \qquad   \alpha_c^{(NLT,1)}
=
\lp  \frac{1-2\hat{\delta}}{\sqrt{2} \erfinv \lp 1-2\hat{\delta}\rp} -   \frac{2}{\sqrt{2\pi}} e^{-\lp \erfinv\lp 1-2\hat{\delta} \rp\rp^2}\rp^2
\approx  \bl{\mathbf{0.1294899}}.
  \end{equation}
Similarly to what we sowed in Figure \ref{fig:fig1}, in Figure \ref{fig:fig4}, we, for $\alpha=\alpha_c^{(NLT,1)}$, show $\xi_{tot}(\delta)$ -- a shifted version of $\xi_1(\delta)$ defined as
\begin{equation}\label{eq:negprac21a0nl}
  \xi_{tot}(\delta)=\xi_1(\delta)-\xi_1(0)=-(1-2\delta)^2-\xi(\delta)^2+1+\alpha.
\end{equation}
As figure indicates, in addition to $\xi_{tot}(0)=0$, one also has $\xi_{tot}(\delta)=0$ for $\delta=\hat{\delta}\approx 0.033935$.

 \begin{figure}[h]
\centering
\centerline{\includegraphics[width=1\linewidth]{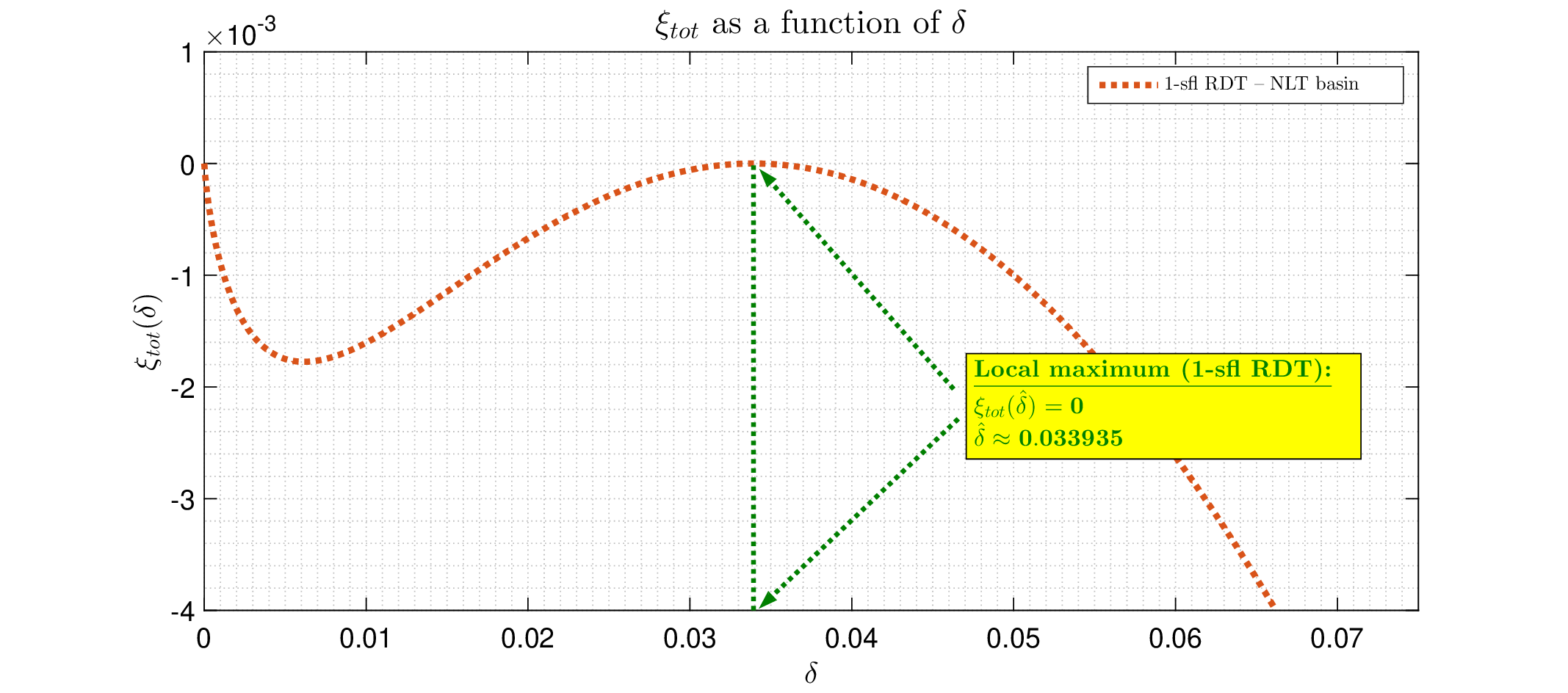}}
\caption{$\xi_{tot}$ as a function of $\delta$; $\alpha_c^{(NLT,1)}
\approx  \bl{\mathbf{0.1294899}}$ -- maximum $\alpha$ such that $\exists\delta\in \lp 0,\frac{1}{2}\rp$ for which  $\xi_{tot}(\delta)=\xi_{tot}(0)$ on the first level of lifting}
\label{fig:fig4}
\end{figure}

For the completeness, we also mention that with a little bit of additional algebraic transformations the above capacity characterization can be rewritten in the following, possibly more compact/elegant, way:
\begin{eqnarray}
   \alpha_c^{(NLT,1)} &  =  & \frac{\mbox{erf}(x)^2}{2x^2}-1+\mbox{erf}(x)^2 \approx \bl{\mathbf{0.1294899}}, \quad
1-\mbox{erf}(x)^2- \frac{2\mbox{erf}(x)e^{-x^2}}{\sqrt{\pi}x}+\frac{2e^{-2x^2}}{\pi}=0.
 \label{eq:repabs2}
\end{eqnarray}
It is not that difficult to see the correspondence $x=\erfinv\lp 1-2\hat{\delta}\rp$.

\subsubsection{$r=2$ -- second level of lifting}
\label{sec:secondlevnl}

One again can utilize the results obtained for the AGS basin. In particular, everything between (\ref{eq:negprac24}) and (\ref{eq:2levder44})
remain in place. Instead of utilizing (\ref{eq:2levaltcap1}), we now recall on (\ref{eq:ex0a16}) and utilize (\ref{eq:altcap1nl}). As the difference between (\ref{eq:2levaltcap1}) and (\ref{eq:altcap1nl}) is in an additionally imposed constraint $\xi_1(\hat{\delta})=\xi_1(0)$, one can still utilize  (\ref{eq:2levaltcap2}) but with a slight modification to account for such a constraint. This basically means that instead of (\ref{eq:2levaltcap1}), we now have
 \begin{eqnarray}
 \alpha_c^{(NLT)} & \triangleq & \max \left \{\alpha |\hspace{.08in} \exists \hat{\delta}\in \lp0,\frac{1}{2}\rp, \left.\frac{d\xi_1(\delta)}{d\delta}\right|_{\delta=\hat{\delta}}=0\quad \mbox{and}\quad \xi_1(\hat{\delta})=\xi_1(0)=-1-\alpha\right \},
  \label{eq:altcap1nl}
\end{eqnarray}
and instead of (\ref{eq:2levaltcap2})
  \begin{equation}
 \alpha_c^{(NLT,2)} =  \max \left \{\alpha |\hspace{.04in} \exists \delta\in \lp0,\frac{1}{2}\rp,
 4(1-2\delta)+4\hat{\nu} \bar{\psi}_{rd}^{(2)}(\hat{\p},\hat{\q},\hat{\c},\hat{\gamma}_{sq},\hat{\nu},\delta)
 =0\quad \mbox{and} \quad \xi_1(\hat{\delta})=\xi_1(0)=-1-\alpha \right \} ,
  \label{eq:2levaltcap2nl}
\end{equation}
where we also recall from (\ref{eq:ex0a12}) and (\ref{eq:negthmprac1eq2})  that on the second level $\xi_1(\delta)$ is given as
\begin{eqnarray}
\xi_1(\delta)
 & = & -\lp 1-2\delta\rp^2 -\xi(\delta)^2 = -\lp 1-2\delta\rp^2 -\lp \bar{\psi}_{rd}^{(2)}(\hat{\p},\hat{\q},\hat{\c},\hat{\gamma}_{sq},\hat{\nu},\delta) \rp^2, \label{eq:ex0a12nl}
\end{eqnarray}
Taking the concrete numerical values gives
\vspace{.05in}
\begin{equation}\label{eq:2levder34a0}
\hspace{-2.5in}(\mbox{\bl{\textbf{second level:}}}) \qquad \qquad \qquad \qquad \qquad  a_c^{(NLT,2)}
\approx  \bl{\mathbf{0.12979}}.
  \end{equation}

\noindent \underline{\textbf{\emph{Concrete numerical values:}}}  Analogously to  Table \ref{tab:tab1}, we in Table \ref{tab:tab2} complement $a_c^{(NLT,2)}$ with the concrete values of all the relevant quantities related to the second \emph{full} (2-sfl RDT) level of lifting. As was the case for the AGS basin in Table \ref{tab:tab1}, we here also show both first and second level parameters' values in parallel ensuring that a systematic view of the lifting progressing mechanism is enabled.
\begin{table}[h]
\caption{$r$-sfl RDT parameters; Hopfield associative memory capacity -- NLT basin;  $\hat{\c}_1\rightarrow 1$; $n,\beta\rightarrow\infty$}\vspace{.1in}
\centering
\def\arraystretch{1.2}
\begin{tabular}{||l||c|c|c||c|c||c|c||c||c||}\hline\hline
 \hspace{-0in}$r$-sfl RDT                                             & $\hat{\gamma}_{sq}$    & $\hat{\nu}$   & $\hat{\delta}$ &  $\hat{\p}_2$ & $\hat{\p}_1`$     & $\hat{\q}_2$  & $\hat{\q}_1$ &  $\hat{\c}_2$    & $\alpha_c^{(NLT,r)}$  \\ \hline\hline
$1$-sfl RDT                                      & $0.1799$ & $-1.8259$ & $0.0339$ & $0$  & $\rightarrow 1$   & $0$ & $\rightarrow 1$
 &  $\rightarrow 0$  & \bl{$\mathbf{0.12949}$} \\ \hline\hline
   $2$-sfl RDT                                      & $0.2309$  & $-1.8111$ &  $0.0347$ & $0.98806$  & $\rightarrow 1$ &  $0.99067$ & $\rightarrow 1$
 &  $8.54157$   & \bl{$\mathbf{0.12979}$}  \\ \hline\hline
  \end{tabular}
\label{tab:tab2}
\end{table}

As earlier, we visualize the above results in Figure \ref{fig:fig5}  by showing $\xi_{tot}(\delta)$.
\begin{figure}[h]
\centering
\centerline{\includegraphics[width=1\linewidth]{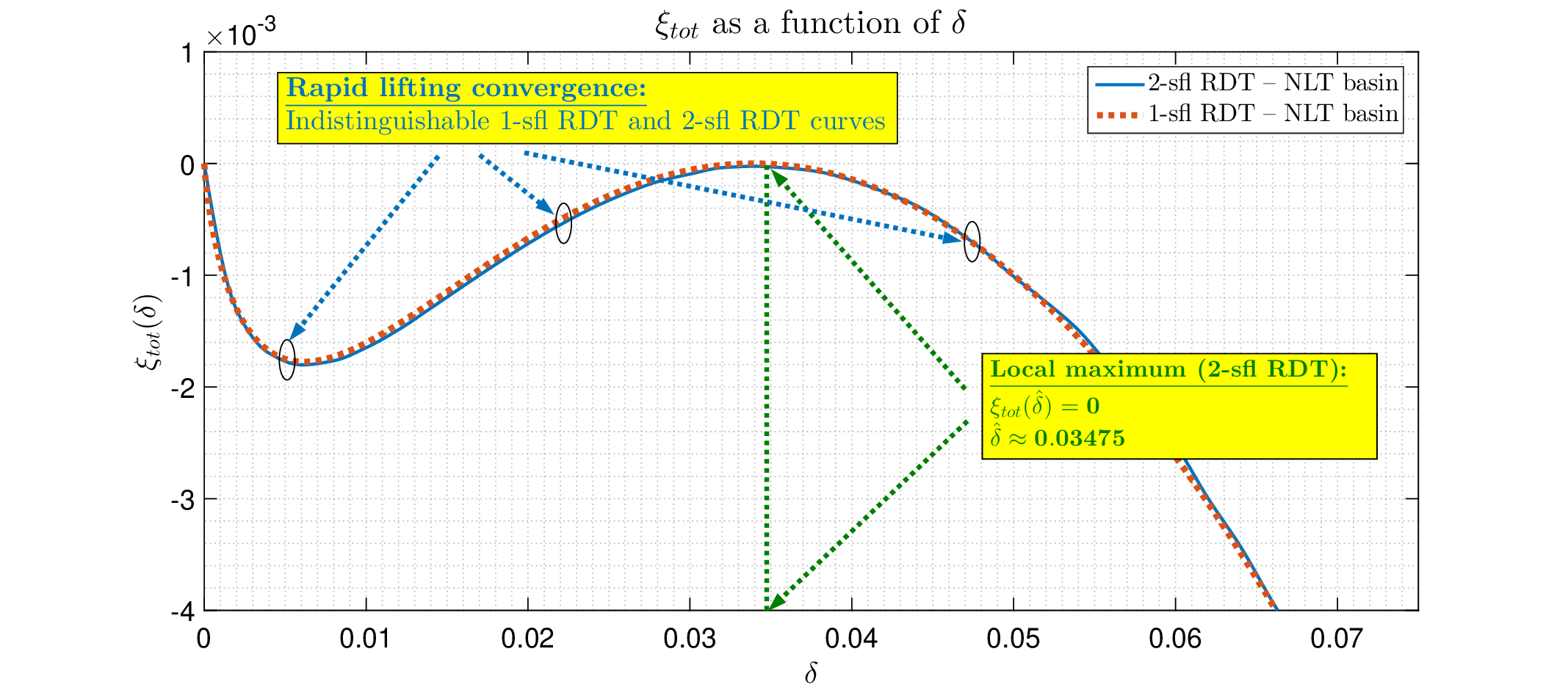}}
\caption{$\xi_{tot}$ as a function of $\delta$; $\alpha_c^{(NLT,2)}
\approx  \bl{\mathbf{0.12979}}$ -- maximum $\alpha$ such that $\exists\delta\in \lp 0,\frac{1}{2}\rp$ for which  $\xi_{tot}(\delta)=\xi_{tot}(0)$ on the second level of lifting}
\label{fig:fig5}
\end{figure}
As is rather clear from the figure, the convergence is remarkably fast. Similarly to what we had for the AGS basin, the $\xi_{tot}(\delta)\triangleq\xi_1(\delta)-\xi_1(0)$ curves for the first and second level are almost indistinguishable and already on the second level of lifting the associative capacity relative improvement is $\sim 0.1\%$. This again, for all practical purposes, renders evaluations for higher levels as basically of not much significance.

\subsection{AGS vs NLT basins}
\label{sec:agsvsnl}

To illustrate the key conceptual difference between AGS and NLT basins, we show their $\xi_{tot}(\delta)$ in parallel in Figure \ref{fig:fig6}.
\begin{figure}[h]
\centering
\centerline{\includegraphics[width=1\linewidth]{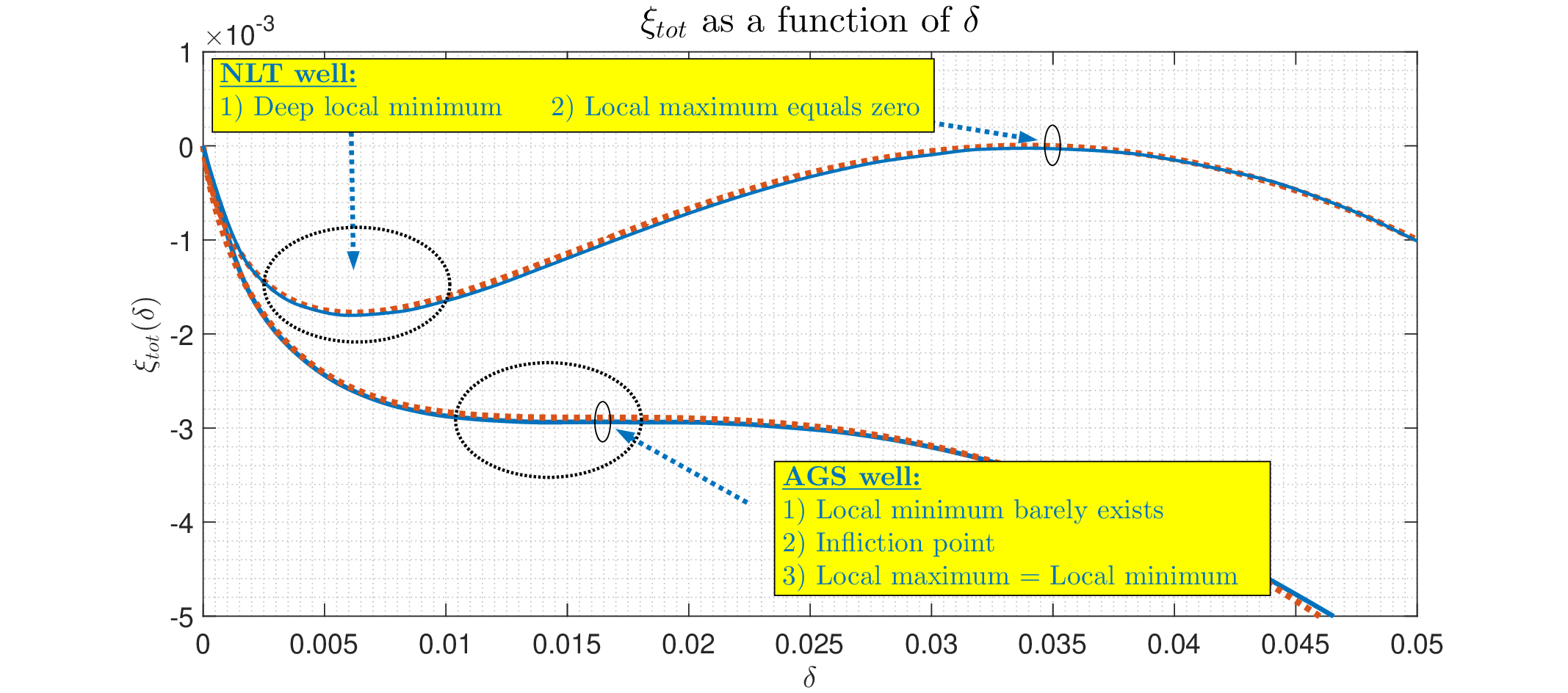}}
\caption{$\xi_{tot}$ as a function of $\delta$; AGS vs NLT basin}
\label{fig:fig6}
\end{figure}
As can be seen from the figure, the lifting convergence is remarkably fast in both cases. Also, the curves clearly display the key point: the AGS basin revolves around the concept of having any form of a well (no matter how shallow it is) whereas the NLT insists on having a more robust rather deep well with the local maximum matching (borderline exceeding) the zero value that corresponds to the value of the aimed stored pattern.

\subsection{Global local minima}
\label{sec:globloc}

Both AGS and NLT ultimately rely on having basins of attraction around the aimed pattern which are characterized by the existence of a local minimum. As $\alpha$ decreases, these local minima become global. The above machinery can be reutilized to determine critical $\alpha=\alpha_c^{(GLM)}$ where this happens. Namely, this will happen where no vector $\x$ which is not in the vicinity of one of the patterns (i.e., which has zero overlap with them) can achieve a minimum lower than the local one from the AGS well. This basically means that one can use all of the above AGS results with the exception that now $\delta=0.5$.

In particular, we have for the first level of lifting
\begin{equation}
 \alpha_c^{(GLM,1)} =  \left \{ \alpha |  \min_{\delta\in \lp 0,\frac{1}{2}\rp }\xi_1(\delta)-\xi_1(0.5) =0 \right \},
 \label{eq:altcap2glm}
\end{equation}
where $\xi_1(\delta)$ is as in (\ref{eq:negprac19a6}), i.e.,
\begin{eqnarray}
\xi_1(\delta)
 & = & -\lp 1-2\delta\rp^2 -
 \lp  \frac{2}{\sqrt{2\pi}} e^{-\lp \erfinv\lp 1-2\delta \rp\rp^2} +\sqrt{\alpha} \rp^2.
 \label{eq:altcap3glm}
\end{eqnarray}
We first note that
\begin{eqnarray}
\xi_1(0.5)
 & = &   -
 \lp  \frac{2}{\sqrt{2\pi}}   +\sqrt{\alpha} \rp^2,
 \label{eq:altcap3a0glm}
\end{eqnarray}
and then, recalling  on (\ref{eq:negprac19a9}), also
\begin{eqnarray}
\frac{d\xi_1(\delta)}{d\delta}
 & = & 4\lp 1-2\delta\rp - 4 \sqrt{2} \erfinv \lp 1-2\delta\rp \lp  \frac{2}{\sqrt{2\pi}} e^{-\lp \erfinv\lp 1-2\delta \rp\rp^2} +\sqrt{\alpha} \rp.
\label{eq:altcap3a1glm}
\end{eqnarray}
Denoting by $\hat{\delta}$ the solution of the optimization in (\ref{eq:altcap2glm}), one observes, after combining (\ref{eq:altcap2glm})-(\ref{eq:altcap3a1glm}),  that pair $\lp\alpha_c^{(GLM,1)},\hat{\delta}\rp$ satisfies the following system
\begin{eqnarray}
& &  -\lp 1-2\delta\rp^2 -
 \lp  \frac{2}{\sqrt{2\pi}} e^{-\lp \erfinv\lp 1-2\delta \rp\rp^2} +\sqrt{\alpha} \rp^2
 =\xi_1(\delta)
=
\xi_1(0.5)
  =    -
 \lp  \frac{2}{\sqrt{2\pi}}   +\sqrt{\alpha} \rp^2 \nonumber \\
& &  \frac{d\xi_1(\delta)}{d\delta}
  =  4\lp 1-2\delta\rp - 4 \sqrt{2} \erfinv \lp 1-2\delta\rp \lp  \frac{2}{\sqrt{2\pi}} e^{-\lp \erfinv\lp 1-2\delta \rp\rp^2} +\sqrt{\alpha} \rp = 0.
 \label{eq:altcap3a2glm}
\end{eqnarray}
From (\ref{eq:altcap3a2glm}), we then further find the following system
\begin{eqnarray}
\sqrt{\alpha}
& = &  (1-2\delta)\sqrt{1+\frac{1}{2(\erfinv(1-2\delta))^2}}-\frac{2}{\sqrt{2\pi}}
 \nonumber \\
  \sqrt{\alpha}
&  =  & \frac{\lp 1-2\delta\rp }{\sqrt{2} \erfinv \lp 1-2\delta\rp } -  \frac{2}{\sqrt{2\pi}} e^{-\lp \erfinv\lp 1-2\delta \rp\rp^2}.
\label{eq:altcap3a3glm}
\end{eqnarray}
From (\ref{eq:altcap3a3glm}) one then establishes that
\begin{eqnarray}
\hat{\delta}
 =  \left \{\delta| (1-2\delta)\lp \sqrt{1+\frac{1}{2(\erfinv(1-2\delta))^2}} - \frac{1 }{\sqrt{2} \erfinv \lp 1-2\delta\rp } \rp
 =\frac{2}{\sqrt{2\pi}} \lp 1- e^{-\lp \erfinv\lp 1-2\delta \rp\rp^2}\rp
  \right \}.
\label{eq:altcap3a4glm}
\end{eqnarray}
Utilizing (\ref{eq:altcap3a4glm})  we find $\hat{\delta}\approx 5.6574\times 10^{-6}$ and
\vspace{.0in}
\begin{equation}
\hspace{-.2in} \mbox{(\bl{\textbf{first level:}})} \qquad\qquad\qquad \alpha_c^{(GLM,1)}  =   \frac{\lp 1-2\hat{\delta}\rp }{\sqrt{2} \erfinv \lp 1-2\hat{\delta}\rp } -  \frac{2}{\sqrt{2\pi}} e^{-\lp \erfinv\lp 1-2\hat{\delta} \rp\rp^2} \approx \bl{\mathbf{0.05185}},
 \label{eq:altcap4glm}
\end{equation}
Moreover, using the above $\hat{\delta}$ and $\alpha_c^{(GLM,1)}$, one also finds $\xi_1(\hat{\delta})-\xi_1(0)=\xi_1(\hat{\delta})+1+\alpha_c^{(GLM,1)}=\xi_1(0.5)-\xi_1(0)=\xi_1(0.5)+1+\alpha_c^{(GLM,1)}\approx -1.072\times 10^{-6}$.

After conducting the corresponding numerical work for the second level of lifting, one obtains the results shown in Tables \ref{tab:tab3} and \ref{tab:tab4}. Table \ref{tab:tab3} relates to $\min_{\delta\in \lp 0,\frac{1}{2}\rp }\xi_1(\delta)$ and Table \ref{tab:tab4} relates to $\xi_1(0.5)$. We also observe that $\alpha_c^{(GLM,1)}$ exactly matches the value obtained through the replica symmetry methods in \cite{AmiGutSom85}.
\begin{table}[h]
\caption{$r$-sfl RDT parameters; Hopfield associative memory capacity -- Global local minima; minimum achieving $\delta\in\lp0,\frac{1}{2}\rp$;  $\hat{\c}_1\rightarrow 1$; $n,\beta\rightarrow\infty$}\vspace{.1in}
\centering
\def\arraystretch{1.2}
\begin{tabular}{||l||c|c|c||c|c||c|c||c||c||}\hline\hline
 \hspace{-0in}$r$-sfl RDT                                             & $\hat{\gamma}_{sq}$    & $\hat{\nu}$   & $\hat{\delta}$ &  $\hat{\p}_2$ & $\hat{\p}_1`$     & $\hat{\q}_2$  & $\hat{\q}_1$ &  $\hat{\c}_2$    & $\alpha_c^{(GLM,r)}$  \\ \hline\hline
$1$-sfl RDT                                      & $0.1139$ & $-4.3904$ & $5.6574e-06$ & $0$  & $\rightarrow 1$   & $0$ & $\rightarrow 1$
 &  $\rightarrow 0$  & \bl{$\mathbf{0.051854}$} \\ \hline\hline
   $2$-sfl RDT                                      & $0.1185$  & $-4.2184$ &  $1.23e-5$ &  $\rightarrow 1$  & $\rightarrow 1$ &  $\rightarrow 1$ & $\rightarrow 1$
 &  $\rightarrow 0$   & \bl{$\mathbf{0.056141}$}  \\ \hline\hline
  \end{tabular}
\label{tab:tab3}
\end{table}
\begin{table}[h]
\caption{$r$-sfl RDT parameters; Hopfield associative memory capacity -- Global local minima; $\delta=0.5$;  $\hat{\c}_1\rightarrow 1$; $n,\beta\rightarrow\infty$}\vspace{.1in}
\centering
\def\arraystretch{1.2}
\begin{tabular}{||l||c|c|c||c|c||c|c||c||c||}\hline\hline
 \hspace{-0in}$r$-sfl RDT                                             & $\hat{\gamma}_{sq}$    & $\hat{\nu}$   & $\hat{\delta}$ &  $\hat{\p}_2$ & $\hat{\p}_1`$     & $\hat{\q}_2$  & $\hat{\q}_1$ &  $\hat{\c}_2$    & $\alpha_c^{(GLM,r)}$  \\ \hline\hline
$1$-sfl RDT                                      & $0.1139$ & $0$ & $0.5$ & $0$  & $\rightarrow 1$   & $0$ & $\rightarrow 1$
 &  $\rightarrow 0$  & \bl{$\mathbf{0.051854}$} \\ \hline\hline
   $2$-sfl RDT                                      & $0.2200$  & $0$ &  $0.5$ &  $0.5625$  & $\rightarrow 1$ & $0.7314$ & $\rightarrow 1$
 &  $0.5308$   & \bl{$\mathbf{0.056141}$}  \\ \hline\hline
  \end{tabular}
\label{tab:tab4}
\end{table}

\section{Conclusion}
\label{sec:conc}

We studied the classical \emph{Hopfield} neural network model with the \emph{Hebbian} learning rule. The primary focus was on network's associative memory. We considered the Hopfield's original scenario where the retrieval dynamics is allowed to make a small fraction of errors in recovering the aimed patterns. Studying binary random patterns,  Hopfield suggested in his introductory paper, that the associative capacity (maximal number of reliably stored patterns, $m$) of fraction of errors allowing dynamics should grow linearly with the pattern's size, $n$. Moreover, utilizing a random noise argument and relying on numerical algorithmic evidence, he also famously predicted that $\alpha =\lim_{n\rightarrow\infty}\frac{m}{n}\approx 0.14$. We here consider several variants of this Hopfield setup and obtain results that are in an excellent agreement with his predictions

After first making the connection between studying Hopfield's nets capacities and  bilinearly indexed (bli) random processes, we utilized recent progress in bli's from \cite{Stojnicsflgscompyx23,Stojnicnflgscompyx23} and the \emph{fully lifted} random duality theory (fl RDT) from  \cite{Stojnicflrdt23} to create a powerful generic framework for the analysis of the networks dynamics. The concept of the associative capacity critically depends on the existence of a basin of attraction around each of the patterns to be memorized. For a proper assessment of the capacity an accurate description of the basin is unavoidably needed. We considered two notions of the basin most prevalently used in the literature: \textbf{\emph{(i)}} The AGS one from  \cite{AmiGutSom85} which relies on the existence of a local energy minimum around aimed pattern; and \textbf{\emph{(ii)}} The NLT one from \cite{Newman88,Louk94,Louk94a,Louk97,Tal98} which relies on the existence of a firm energy barrier around patterns. A tradeoff between the two exists as well. Namely, the NLT one is more strict and robust and implies the first one. On the other hand the capacities that it allows for are smaller than the ones allowed by the AGS.

To be able to successively utilize the fl RDT and obtain concrete capacity characterizations for both AGS and NLT basins, one needs to perform a substantial amount of numerical work. We performed all the needed evaluations, uncovered remarkable closed form explicit analytical relations among key lifting parameters, and ultimately obtained concrete numerical values for the capacities of both basin. Moreover, on the first level of lifting, we obtained explicit closed form capacity characterizations. After completing the second level numerical evaluations, we uncovered a surprising convergence property. Namely, while the convergence rate of fl RDT is generically excellent, we here discovered that it is remarkably rapid even for the typical fl RDT standards. In particular, we found for both AGS and NLT that the relative improvements no better than $\sim 0.1\%$ are happening already on the \textbf{\emph{second}} (first nontrivial) level of the full lifting. We also found that the obtained AGS based capacity characterizations exactly match those obtained through the statistical physics replica symmetry methods of \cite{AmiGutSom85} and the corresponding symmetry breaking ones of \cite{SteKuh94}. On the other hand the NLT ones are substantially higher than the previously best known ones of \cite{Newman88,Louk94,Louk94a,Louk97,Tal98}.

The developed methodology is very generic and  further extensions and generalizations are possible. They include studying various other Network properties as well as different architectures and dynamics. The associated technical details are problem specific and we discuss them in separate papers.

\begin{singlespace}
\bibliographystyle{plain}
\bibliography{nflgscompyxRefs}
\end{singlespace}

\end{document}